\pdfoutput=1

\documentclass[11pt]{article}

\usepackage[final]{acl}

\usepackage{times}
\usepackage{latexsym, amsthm, bbding}
\usepackage{appendix, subfigure, mathtools}
\usepackage{tcolorbox, multicol, multirow, makecell, amsfonts, booktabs, colortbl, tabularx}
\usepackage{algorithm, algpseudocode}

\tcbuselibrary{skins, breakable, theorems}
\usepackage[T1]{fontenc}

\usepackage[utf8]{inputenc}

\usepackage{microtype}
\usepackage{textcomp}
\usepackage{float}
\usepackage{inconsolata}
\usepackage{url}

\usepackage{graphicx}

\title{Free Energy-Driven Reinforcement Learning with Adaptive Advantage Shaping for Unsupervised Reasoning in LLMs}

\author{
  Yiming Huang\textsuperscript{1}, 
  Zhenbo Shi\textsuperscript{1}, 
  Xincheng Wen\textsuperscript{1}, 
  Jichuan Zeng\textsuperscript{3}, 
  Cuiyun Gao\textsuperscript{1}\thanks{Corresponding authors.}, 
  Peiyi Han\textsuperscript{1,2}, 
  Chuanyi Liu\textsuperscript{1,2}\footnotemark[1]
  \\
  \textsuperscript{1}Harbin Institute of Technology, Shenzhen \\
  \textsuperscript{2}Peng Cheng Laboratory \\
  \textsuperscript{3}The Chinese University of Hong Kong \\
  \small \textbf{Correspondence:} 
  \href{mailto:24b951042@stu.hit.edu.cn,2023311604@stu.hit.edu.cn,xiamenwxc@foxmail.com,jczeng@cse.cuhk.edu.hk}{
    \{24b951042,2023311604\}@stu.hit.edu.cn, xiamenwxc@foxmail.com, jczeng@cse.cuhk.edu.hk, 
  }
  \\ 
  \small \href{mailto:gaocuiyun@hit.edu.cn,hanpeiyi@hit.edu.cn,liuchuanyi@hit.edu.cn}{
    \{gaocuiyun,hanpeiyi,liuchuanyi\}@hit.edu.cn
  }
}

\begin{document}
\maketitle
\begin{abstract}

Unsupervised reinforcement learning (RL) has emerged as a promising paradigm for enabling self-improvement in large language models (LLMs). However, existing unsupervised RL-based methods often lack the capacity to adapt to the model's evolving reasoning capabilities during training. Therefore, these methods can misdirect policy optimization in the absence of ground-truth supervision. To address this issue, we introduce \textbf{FREIA}, a novel RL-based algorithm built on two key innovations: (1) \textit{Free Energy-Driven Reward (FER)} adapts rewards to balance consensus and exploration based on the Free Energy Principle. (2) \textit{Adaptive Advantage Shaping (AAS)} adaptively adjusts learning signals based on the statistical characteristics of sampled rewards. Empirical evaluations on nine datasets across three reasoning tasks showcase that FREIA outperforms other unsupervised RL-based baselines. Notably, in mathematical reasoning tasks, FREIA surpasses other methods by an average of 0.5 to 3.5 points in Pass@1 using the DeepSeek-R1-Distill-Qwen-1.5B model.

\end{abstract}

\section{Introduction}

Reinforcement Learning with Verifiable Rewards (RLVR) \cite{wen2025reinforcement} 
has emerged as a fundamental technique for enhancing the reasoning capabilities of Large Language Models (LLMs) \cite{bai2025qwen2, guo2025deepseek}. 
Through alignment with ground-truth supervision signals, RLVR has demonstrated substantial improvements across a wide range of reasoning tasks \cite{chen2025bridging, wang2025geometryzero, yu2018spider}. Nevertheless, this paradigm critically depends on external supervision, which imposes significant data annotation costs requiring human expertise.
As a result, recent research has increasingly explored methods that enable models to learn from intrinsic signals, a process referred to as \textit{unsupervised self-improvement} \cite{zuo2025ttrl}.

\begin{figure}[t]
    \centering
    \includegraphics[width=\linewidth]{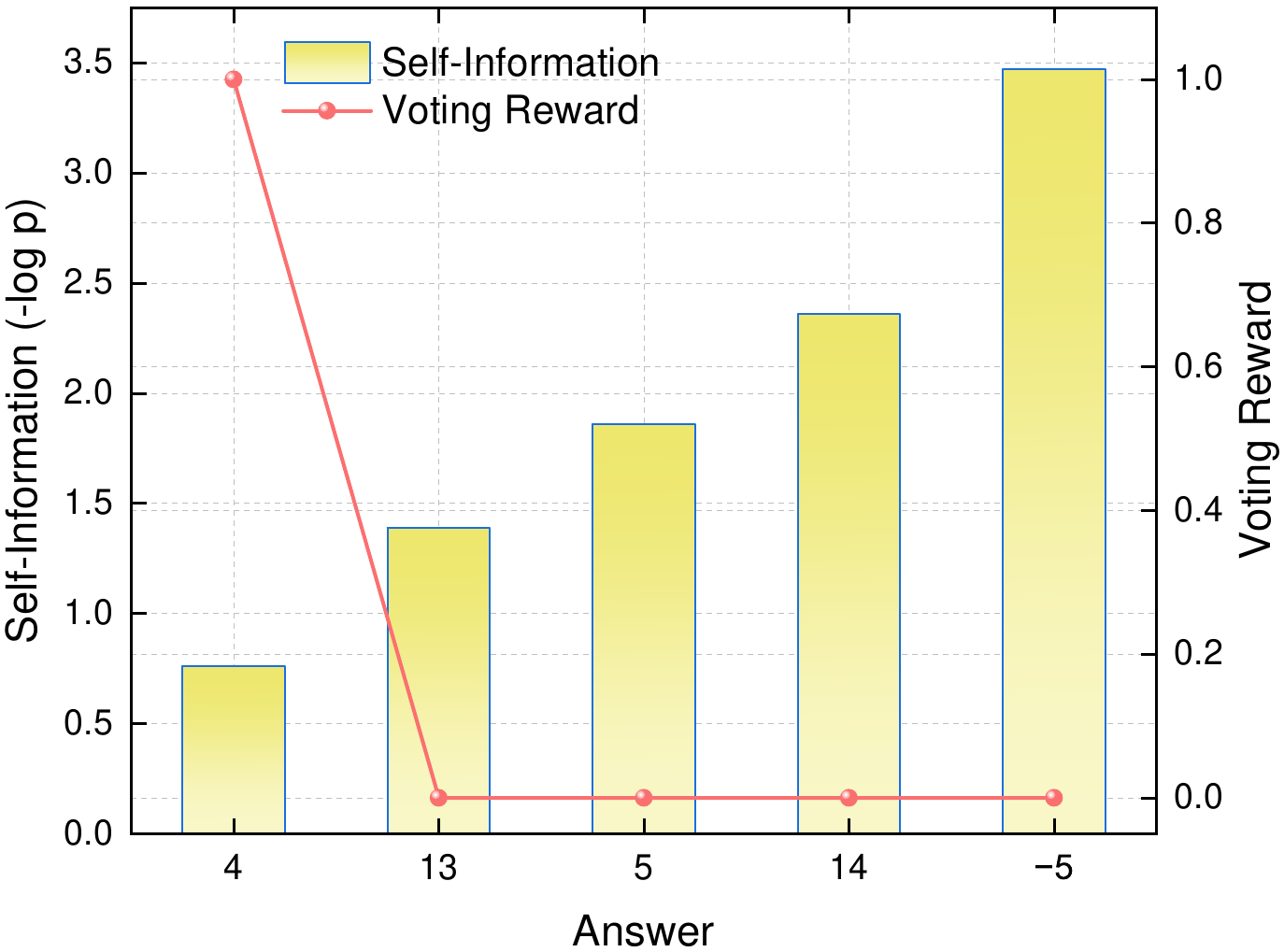}
    \caption{An analysis of reward signals for a math problem, where the correct answer is ``13''. Note that $p$ indicates the frequency of sampled answers.}
    \label{fig:fer_motivation_example}
\end{figure}

\begin{figure*}[t]
\centering
\includegraphics[width=\textwidth]{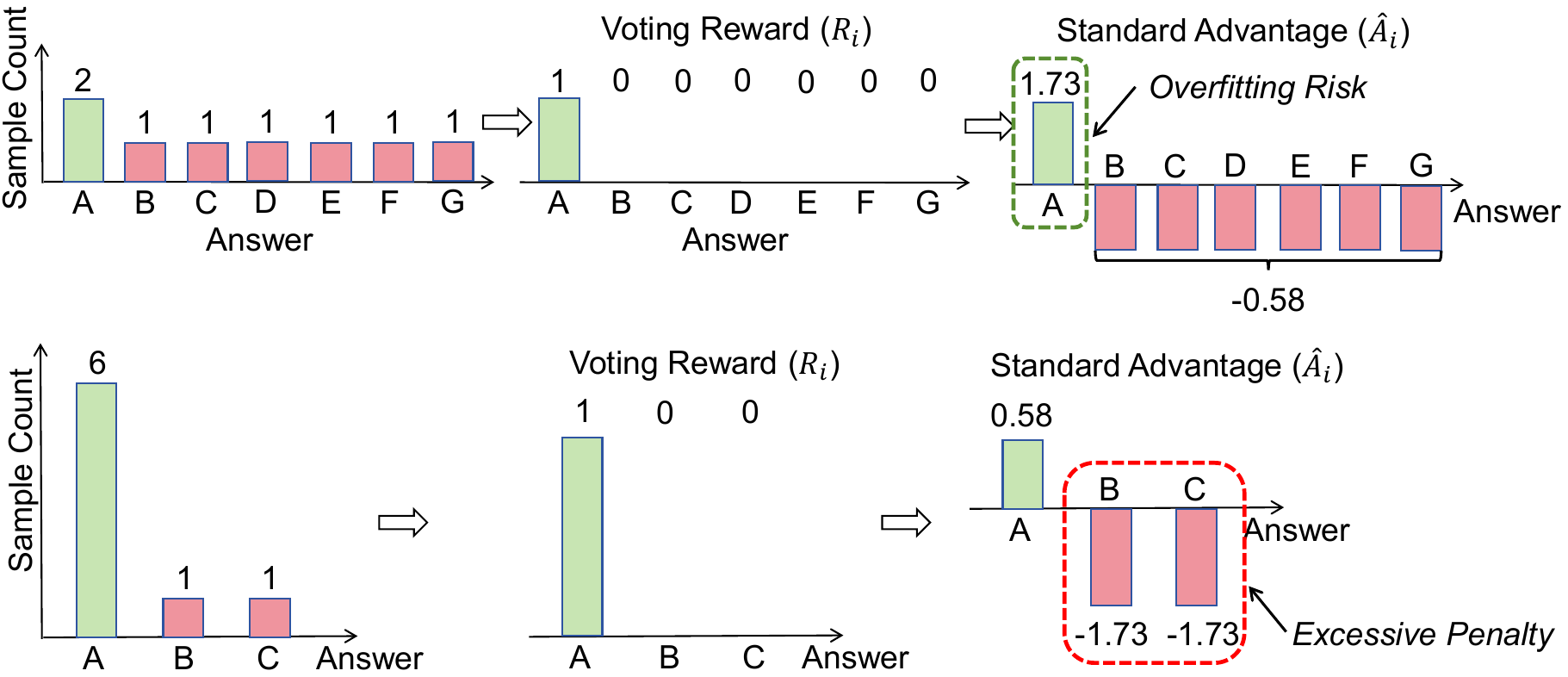}
\caption{An analysis of standard advantage shaping.
\textbf{(Top: Weak Consensus)} The high-reward answer appears infrequently. Standard advantage assigns excessive values to these rare answers.
\textbf{(Bottom: Strong Consensus)} The majority answer dominates the population. Standard advantage excessively penalizes the occasional deviations.}
\label{fig:acra_motivation_binary}
\end{figure*}

A key challenge in promoting unsupervised self-improvement lies in the formulation of reliable learning signals. To cope with this, existing unsupervised methods can be categorized into \textit{trajectory-intrinsic methods} and \textit{population-based methods}. Specifically, \textit{trajectory-intrinsic methods} derive rewards from intrinsic measures of reasoning paths such as model uncertainty \cite{prabhudesai2025maximizing, li2025confidence, garwal2025unreasonable, zhao2025learning, zhang2025no}. 
In contrast, \textit{population-based methods} produce reward signals by comparing each reasoning output against a set of candidate outputs to exploit collective consensus. They assume that majority agreement reflects correctness  \cite{prasad2024self, zuo2025ttrl, liu2025ettrl, yuan2025wisdom, zhang2025co}.  
However, both paradigms cannot adapt to the model's evolving capabilities during training, thereby undermining effective policy optimization.

First, current methods often misdirect policy optimization by applying static criteria. Specifically, \textit{population-based methods} rely solely on consensus. They discard valuable information from minority reasoning paths and impede necessary exploration, particularly during early training stages. As shown in Figure \ref{fig:fer_motivation_example}, majority voting assigns zero reward to the correct answer (``13'') while granting the maximum reward to the incorrect majority (``4''). This bias reinforces erroneous consensus and increases vulnerability to incorrect agreement in unsupervised settings \cite{huang2025low}.  
In contrast, \textit{trajectory-intrinsic methods} risk reinforcing high‑confidence but incorrect answers. This risk is particularly pronounced when the model’s reasoning ability is limited, leading to a significant mismatch between model confidence and actual correctness \cite{zhang2025no}. Figure \ref{fig:fer_motivation_example} illustrates that approaches relying solely on self‑information assign high rewards to rare answers without considering correctness. This bias disrupts robust policy optimization by rewarding incorrect solutions\footnote{The problem is to find the integer $0 \leq n < 18$ such that $n \equiv -11213141$ (mod $18$).}.  

Beyond these design flaws, a second critical issue emerges in how these rewards are processed. Existing unsupervised RL-based methods apply static advantage shaping despite the continuous changes of reward distributions during training. As depicted in Figure~\ref{fig:acra_motivation_binary} (Top), training often begins in a \textit{Weak Consensus} phase, where high‑reward answers are sparse. In this phase, standard advantage normalization assigns large positive advantages to infrequent high‑reward samples. Without label verification, this bias risks overfitting to potentially misleading signals and early convergence before exploring all possible solutions. As the model reaches a \textit{Strong Consensus} (Figure~\ref{fig:acra_motivation_binary}, Bottom), the same strategy assigns excessive penalties to occasional low-reward samples. This shifts the policy optimization focus from refining the dominant consensus to merely avoiding errors. 

As a result, addressing these issues necessitates solving a key research problem: 

\begin{tcolorbox}[colback=black!5!white, colframe=black, 
                  title=\textcolor{black}{\textbf{Key Research Problem}}, 
                  coltitle=white, colbacktitle=pink, breakable,
                  boxrule=1pt, width=0.48\textwidth]
\textbf{\textit{How to design an unsupervised RL-based algorithm that adaptively regulates policy optimization to align with the model's evolving learning dynamics?}}
\end{tcolorbox}

To answer this question, we propose a new algorithm: \textit{Free Energy-Driven Reinforcement Learning with Adaptive Advantage Shaping} (\textbf{FREIA}). This algorithm is designed to enhance the reasoning capabilities of LLMs without external supervision. FREIA integrates two key innovations: (1) \textbf{Free Energy-Driven Reward (FER)} formulates a unified reward based on the \textit{Free Energy Principle (FEP)}. FER views self‑improvement as a process of minimizing free energy. The objective is decomposed into consensus alignment and exploration of novel reasoning paths. By optimizing this objective, the model leverages its internal uncertainty to mitigate early convergence while reinforcing high-quality reasoning paths as confidence grows. (2) \textbf{Adaptive Advantage Shaping (AAS)} adjusts policy updates based on the model's evolving learning dynamics. By analyzing the skewness of reward distribution, AAS identifies the current learning phase and adaptively adjusts advantage signals. This approach reduces the impact of unreliable outliers during early exploration and prevents strict penalties for occasional reasoning paths during convergence. Collectively, these innovations enable FREIA to achieve effective self-improvement in unsupervised settings. We evaluate FREIA on nine benchmarks covering three reasoning tasks. Extensive experiments demonstrate its superiority over other unsupervised baselines.

In summary, this work delivers the following contributions:

(1) We identify a fundamental misalignment in existing unsupervised RL-based methods. First, current reward designs fail to adapt to the model's evolving reasoning capabilities. Second, current advantage estimation ignores the shifting distribution of reward signals during training.

(2) We introduce FREIA, a novel unsupervised RL-based algorithm featuring two key innovations. \textit{Free Energy-Driven Reward (FER)} balances consensus and exploration based on the Free Energy Principle. Moreover, \textit{Adaptive Advantage Shaping (AAS)} adaptively modulates advantage estimation based on real-time reward distributions.

(3) Extensive experimental results confirm that FREIA surpasses other unsupervised baselines on nine benchmarks across three reasoning tasks. This showcases its efficacy in enhancing reasoning performance without external supervision.

\begin{figure*}[t]
\centering
\includegraphics[width=\textwidth]{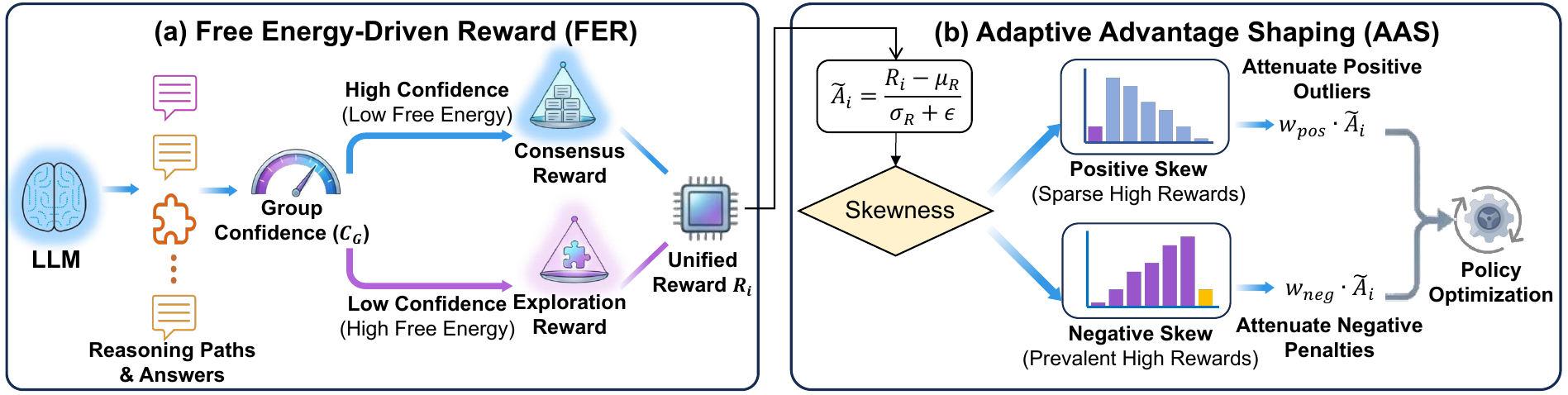}
\caption{The overall framework of FREIA, including Free Energy-Driven Reward (FER) and Adaptive Advantage Shaping (AAS).}
\label{fig:method_overview}
\end{figure*}

\section{Related Work}

Recent efforts to enable self-improvement in LLMs without ground-truth supervision have catalyzed research into unsupervised RL. Existing methods fall into two main paradigms: \textit{trajectory-intrinsic methods} and \textit{population-based methods}.

\noindent \textbf{Trajectory-Intrinsic Methods.} These methods assign rewards to individual reasoning paths using intrinsic metrics (\textit{e.g.}, semantic entropy) to estimate self-confidence \cite{garwal2025unreasonable, li2025confidence, prabhudesai2025maximizing, zhao2025learning, zhang2025no}. While these approaches aim to reduce uncertainty, they often suffer from self-reinforcement bias. Notably, high-confidence but incorrect answers often receive large rewards in the absence of external verification or peer comparison. Additionally, when model confidence is miscalibrated, these methods may exacerbate systematic errors by reinforcing flawed reasoning patterns. They also lack mechanisms to distinguish between genuine certainty derived from accurate reasoning and overconfidence caused by erroneous correlations, leading to instability throughout the training process.

\noindent \textbf{Population-Based Methods.} These methods evaluate answers by comparing each output against other sampled candidates, under the assumption that collective agreement correlates with correctness. Existing strategies can be divided into two levels of granularity: \textit{Final-Answer Consensus} relies on majority voting over final outputs \cite{prasad2024self, zuo2025ttrl, liu2025ettrl, yuan2025wisdom}. This approach often produces stable but rigid learning signals, ignoring valuable minority paths that can lead to correct solutions in later training stages. In contrast, \textit{Cross-View Consensus} \cite{zhang2025co} requires agreement across perturbed input variants to improve robustness. However, it often suppresses exploratory behaviors that are critical in complex reasoning tasks. More importantly, both approaches remain outcome-centric, assigning rewards only based on final agreement.

This analysis underscores the need for a unified and principled framework that integrates the strengths of both paradigms while mitigating their respective limitations. Accordingly, this enables more reliable and adaptive model learning in the absence of ground-truth supervision.

\section{Methodology}\label{sec:method}
As shown in Figure \ref{fig:method_overview} and Appendix \ref{appendix:pesudo}, FREIA integrates Free Energy-Driven Reward (FER) and Adaptive Advantage Shaping (AAS).

\subsection{Free Energy-Driven Reward (FER)}
\label{ssec:bcr}

Existing unsupervised RL-based methods typically produce single-dimensional rewards that misalign with the model's learning dynamics. These methods lack the flexibility to adjust the balance between consensus and exploration as the model improves. To address this, we propose \textbf{Free Energy-Driven Reward (FER)}, which is grounded in the \textit{Free Energy Principle (FEP)} \cite{friston2010free, buckley2017free}. Specifically, FER formulates self‑improvement as minimizing free energy. This objective is decomposed into consensus alignment and exploration of novel reasoning paths.

Specifically, given an input $x$, the model samples a set of $G$ reasoning paths $Y = \{y_1, y_2, \dots, y_G\}$ and extracts final answers $A = \{a_1, ..., a_G\}$. Let $U=\{u_1, u_2, \dots, u_M\}$ denote the set of unique answers within $A$. The frequency distribution of these unique answers $D = \{f_1, f_2, \dots, f_M\}$ serves as empirical input for the subsequent steps.

\textbf{Step 1:} 
Each unique answer $u_i$ is regarded as a hypothesis about the ground truth. To distinguish robust consensus from stochastic noise, we apply a \textit{Non-linear Belief Sharpening} mechanism. Using the frequency $f_i$, the belief weight $w_i$ is given as:
\begin{equation}
    w_i = \text{Softmax}(\alpha \cdot \log(f_i)) = \frac{f_i^\alpha}{\sum_{k=1}^M f_k^\alpha}
    \label{eq:posterior_weight}
\end{equation}
where $W = \{w_1, ..., w_M\}$ constitutes the refined belief distribution over candidate answers.

\textbf{\textit{Analysis.}} 
Since the true ground-truth distribution is inaccessible in unsupervised settings, an \textit{empirical belief} is derived from the sampled response set. From the perspective of FEP, $\alpha$ modulates the model's confidence in the current batch, forcing the distribution to focus on the leading consensus when $\alpha$ increases. Further analysis on $\alpha$ is shown in Appendix \ref{sec:belief}.

\textbf{Step 2:} 
We define \textit{Group Confidence} $C_G$ to quantify the degree of group consensus for each training sample:
\begin{equation}
    C_G = 
    \begin{cases} 
    1.0 & \text{if } M = 1 \\
    1 - \frac{H(W)}{\log M} = 1 - \frac{-\sum_{j} w_j \log w_j}{\log M} & \text{if } M > 1 
    \end{cases}
    \label{eq:group_confidence}
\end{equation}
where $H(\cdot)$ denotes the Shannon entropy. $C_G \to 1$ signifies high certainty, whereas $C_G \to 0$ indicates high uncertainty. The superiority of $C_G$ is further examined in Appendix \ref{app:mixing_ablation}.

\textbf{Step 3:} 
The total reward for each $y_i$ is formulated as an adaptive trade-off between \textit{Consensus} and \textit{Exploration}, modulated by the group confidence $C_G$. Specifically, \textit{Consensus} encourages the model to align with the most probable solution:
\begin{equation}
    r_{\text{cons}}(y_i) = 
    \begin{cases}
        1.0 & \text{if } a_i = \text{Vote}(\{a_k\}_{k=1}^G) \\
        0.0 & \text{otherwise}
    \end{cases}
    \label{eq:exploit_reward}
\end{equation}

Moreover, \textit{Exploration} incentivizes the discovery of diverse solutions when uncertainty is high. For a given $y_i$ producing answer $a_i$ with belief weight $w_i$, the reward is defined as:
\begin{equation}
    r_{\text{explore}}(y_i) = \tanh (-\log w_i)
    \label{eq:explore_reward}
\end{equation}
where the $\tanh$ function serves as a soft normalization mechanism. This strictly bounds $r_{\text{explore}}$ within $(0,1)$, preventing the exploration signal from dominating the policy optimization process.

\begin{figure}[h!]
\centering
\includegraphics[width=\linewidth]{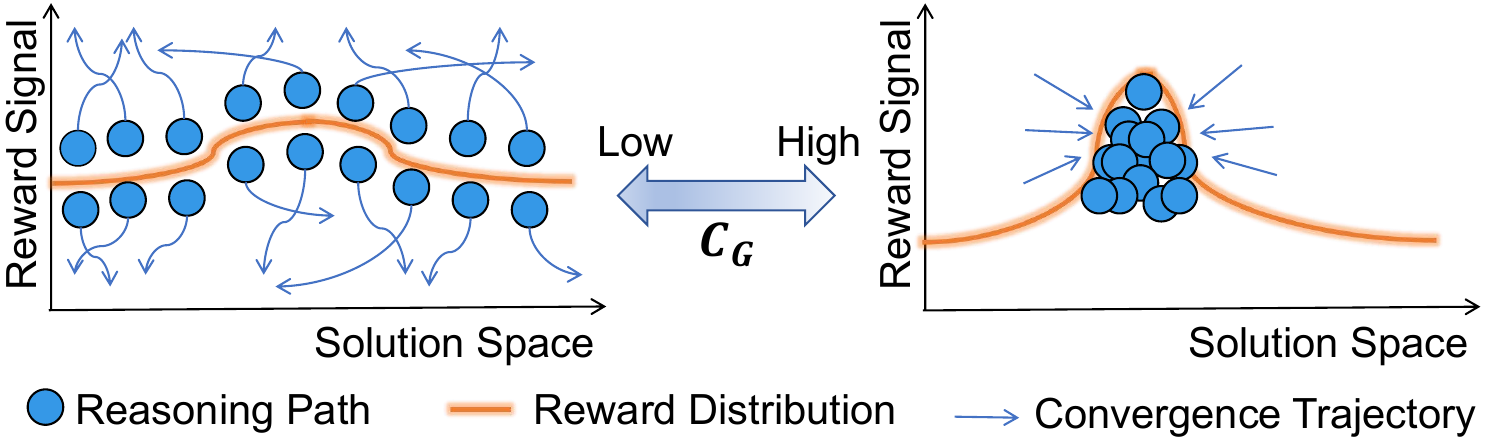}
\caption{Visualization of FER. Specifically, the left part depicts the Exploration term when $C_G$ is low, while the right part shows the Consensus term under high $C_G$.
}
\label{fig:fer}
\end{figure}

As illustrated in Figure \ref{fig:fer}, the final reward integrates the \textit{Consensus} and \textit{Exploration} components modulated by $C_G$, which is given as:
\begin{equation}
    R_i = C_G \cdot r_{\text{cons}}(y_i) + (1 - C_G) \cdot r_{\text{explore}}(y_i)
    \label{eq:bcr_final}
\end{equation}

Eq. (\ref{eq:bcr_final}) adaptively regulates the learning process, which prioritizes consolidating the consensus in high-confidence states ($C_G \to 1$), while encouraging exploration in low-confidence states ($C_G \to 0$). Further analysis of FER is shown in Appendix \ref{sec:fep_derivation}.

\subsection{Adaptive Advantage Shaping (AAS)}
\label{ssec:aas}

Standard advantage functions are suboptimal for self-improvement in unsupervised settings. Therefore, we propose \textbf{Adaptive Advantage Shaping (AAS)}, which adaptively modulates advantages based on the reward distributional characteristics.

\textbf{Step 1:} For each $y_i$, the standard advantage $\tilde{A}_i$ is derived from the FER scores $\{R(y_k)\}_{k=1}^G$:
\begin{equation}
    \tilde{A}_{i} = \frac{R_i - \mu_R}{\sigma_R+\epsilon}
    \label{eq:standard_advantage}
\end{equation}
where $\mu_R$ and $\sigma_R$ are the mean and standard deviation of the FER scores, respectively. $\epsilon$ is a small constant for numerical stability.

\textbf{Step 2: } The learning dynamics of the model are quantified by the \textit{skewness} of the reward distribution $\{R(y_k)\}_{k=1}^G$. A positive skew indicates a distribution dominated by low rewards, whereas a negative skew signifies a prevalence of high rewards. The sample skewness $\mathcal{S}$ is defined as (See Appendix \ref{sec:skewness} for additional analysis):
\begin{equation}
    \mathcal{S} = \frac{1}{G} \sum_{i=1}^G \left(\frac{R_i - \mu_R}{\sigma_R + \epsilon} \right)^3
    \label{eq:skewness}
\end{equation}

\textbf{Step 3:} 
Adaptive weights for positive and negative advantages are computed from $\mathcal{S}$ to mitigate potential biases in standard normalization:
\begin{equation}
    w_{\text{pos}} = \sigma(-\mathcal{S}), ~w_{\text{neg}} = \sigma(\mathcal{S})
\end{equation}
where $\sigma(\cdot)$ denotes the sigmoid function.

\textbf{\textit{Analysis.}} AAS functions as a stabilizer against unsupervised distributional noise:

\textit{Case 1: Positive Skew.} In this case, standard normalization assigns excessively high advantages to rare winning paths. However, rewards are computed without ground‑truth labels in unsupervised settings, indicating that high rewards do not necessarily correspond to correct or generalizable reasoning. Since such paths may represent stochastic outliers rather than robust solutions, AAS \textit{attenuates positive advantages} ($w_{\text{pos}} \to 0$). This ensures a cautious policy update, preventing the model from overfitting to potentially misleading signals. 

\textit{Case 2: Negative Skew.} In this case, standard normalization assigns large penalties to occasional deviations. Under unsupervised conditions, the absence of ground-truth supervision also makes it unclear whether those deviations are genuine reasoning errors or harmless variations. As these deviations may not reflect fundamental mistakes, AAS \textit{attenuates negative advantages} ($w_{\text{neg}} \to 0$). This ensures that the model does not over-correct based on rare low-reward samples. Additional analysis on AAS is shown in Appendix \ref{appendix:theoretical_guarantees}.

\textbf{Step 4: } The final advantage $\hat{A}_i$ used in the policy update is modulated by these adaptive weights:
\begin{equation}
    \hat{A}_i =
    \begin{cases}
        w_{\text{pos}} \cdot \tilde{A}_i & \text{if } \tilde{A}_i > 0 \\
        w_{\text{neg}} \cdot \tilde{A}_i & \text{if } \tilde{A}_i < 0
    \end{cases}
    \label{eq:acra_final}
\end{equation}

This shaped advantage $\hat{A}_i$ is integrated into the standard GRPO objective \cite{shao2024deepseekmath}:
\begin{equation}
    \begin{aligned}
    \mathcal{L}(\theta) =& 
    \mathbb{E} \Big[\frac{1}{G} \sum_{i=1}^G \frac{1}{|o_i|} \sum_{t=1}^{|o_i|} 
    \min\Big( r_{i,t}(\theta) \hat{A}_{i},\,  \\
    &\text{clip}(r_{i,t}(\theta), 1-\epsilon, 1+\epsilon) \hat{A}_{i} \Big) \\
    &- \beta D_{\text{KL}}(\pi_\theta \| \pi_{\text{ref}})\Big]
    \end{aligned}
    \label{eq:GRPO}
\end{equation}

\begin{table*}[t]
\centering
\small
\setlength{\tabcolsep}{4.9mm}
\begin{tabular}{lcccccc}
\toprule
\textbf{Dataset} & \textbf{Base} & \textbf{GRPO (Supervised)} & \textbf{TTRL} & \textbf{Entropy} & \textbf{Intuitor} & \textbf{FREIA} \\
\midrule

\rowcolor{green!10} \multicolumn{7}{c}{\textit{Qwen2.5-Math-1.5B-Instruct}} \\
\midrule
MATH500  & 74.2 & 75.2$_{\pm 0.7}$ & 75.0$_{\pm 0.4}$ & 74.6$_{\pm 0.8}$ & 74.4$_{\pm 0.8}$ & \textbf{75.4}$_{\pm 0.2}$ \\
AIME24   & 10.0 & \textbf{13.3}$_{\pm 1.9}$ & \textbf{13.3}$_{\pm 0.0}$ & 10.0$_{\pm 0.0}$ & 10.0$_{\pm 0.0}$ & \textbf{13.3}$_{\pm 0.0}$ \\
AIME25   & 3.3  & \textbf{16.7}$_{\pm 0.0}$ & \textbf{16.7}$_{\pm 0.0}$ & 13.3$_{\pm 0.0}$ & 10.0$_{\pm 0.0}$ & \textbf{16.7}$_{\pm 0.0}$ \\
AMC23    & 47.5 & \textbf{52.5}$_{\pm 1.4}$ & \textbf{52.5}$_{\pm 0.0}$ & 47.5$_{\pm 0.0}$ & 47.5$_{\pm 0.0}$ & \textbf{52.5}$_{\pm 0.0}$ \\
Minerva  & 28.7 & 31.3$_{\pm 0.9}$ & 30.9$_{\pm 0.5}$ & 29.4$_{\pm 0.8}$ & 28.7$_{\pm 0.9}$ & \textbf{32.0}$_{\pm 0.4}$ \\
Olympiad & 35.2 & 40.8$_{\pm 0.6}$ & 40.1$_{\pm 0.4}$ & 39.2$_{\pm 0.7}$ & 38.4$_{\pm 0.7}$ & \textbf{41.2}$_{\pm 0.3}$ \\
\textit{Avg.}     & 33.2 & 38.3$_{\pm 0.9}$ & 38.1$_{\pm 0.2}$ & 35.7$_{\pm 0.4}$ & 34.8$_{\pm 0.4}$ & \textbf{38.5}$_{\pm 0.2}$ \\
\midrule

\rowcolor{blue!10} \multicolumn{7}{c}{\textit{Qwen2.5-3B-Instruct}} \\
\midrule
MATH500  & 62.0 & 66.0$_{\pm 0.6}$ & \textbf{66.6}$_{\pm 0.5}$ & 64.6$_{\pm 0.7}$ & 64.0$_{\pm 0.7}$ & 65.2$_{\pm 0.3}$ \\
AIME24   & 0.0  & \textbf{10.0}$_{\pm 0.0}$ & 6.7$_{\pm 0.0}$ & 3.3$_{\pm 1.9}$ & 3.3$_{\pm 0.0}$ & \textbf{10.0}$_{\pm 0.0}$ \\
AIME25   & 0.0  & \textbf{10.0}$_{\pm 0.0}$ & 6.7$_{\pm 0.0}$ & 3.3$_{\pm 0.0}$ & 3.3$_{\pm 0.0}$ & \textbf{10.0}$_{\pm 0.0}$ \\
AMC23    & 35.0 & 37.5$_{\pm 1.4}$ & \textbf{40.0}$_{\pm 0.0}$ & 37.5$_{\pm 0.0}$ & 37.5$_{\pm 0.0}$ & 37.5$_{\pm 0.0}$ \\
Minerva  & 24.3 & 25.4$_{\pm 0.8}$ & 25.8$_{\pm 0.5}$ & 25.0$_{\pm 0.9}$ & 24.3$_{\pm 0.9}$ & \textbf{26.1}$_{\pm 0.4}$ \\
Olympiad & 29.1 & 31.5$_{\pm 0.6}$ & 31.2$_{\pm 0.4}$ & 30.0$_{\pm 0.7}$ & 29.8$_{\pm 0.8}$ & \textbf{31.9}$_{\pm 0.3}$ \\
\textit{Avg.}     & 25.1 & \textbf{30.1}$_{\pm 0.6}$ & 29.5$_{\pm 0.2}$ & 27.3$_{\pm 0.7}$ & 27.0$_{\pm 0.4}$ & \textbf{30.1}$_{\pm 0.2}$ \\
\midrule

\rowcolor{orange!10} \multicolumn{7}{c}{\textit{DeepSeek-R1-Distill-Qwen-1.5B}} \\
\midrule
MATH500  & 77.6 & 82.4$_{\pm 0.4}$ & \textbf{82.6}$_{\pm 0.3}$ & 81.8$_{\pm 0.6}$ & 81.4$_{\pm 0.6}$ & 82.2$_{\pm 0.2}$ \\
AIME24   & 16.7 & \textbf{20.0}$_{\pm 0.0}$ & \textbf{20.0}$_{\pm 0.0}$ & 16.7$_{\pm 0.0}$ & 16.7$_{\pm 0.0}$ & \textbf{20.0}$_{\pm 0.0}$ \\
AIME25   & 16.7 & \textbf{20.0}$_{\pm 1.9}$ & \textbf{20.0}$_{\pm 0.0}$ & 16.7$_{\pm 0.0}$ & 16.7$_{\pm 0.0}$ & \textbf{20.0}$_{\pm 0.0}$ \\
AMC23    & 62.5 & 70.0$_{\pm 0.0}$ & 70.0$_{\pm 0.0}$ & 65.0$_{\pm 0.0}$ & 65.0$_{\pm 0.0}$ & \textbf{72.5}$_{\pm 0.0}$ \\
Minerva  & 27.6 & 30.5$_{\pm 0.6}$ & 30.9$_{\pm 0.4}$ & 29.8$_{\pm 0.7}$ & 29.4$_{\pm 0.7}$ & \textbf{31.3}$_{\pm 0.3}$ \\
Olympiad & 42.4 & 48.6$_{\pm 0.5}$ & 49.0$_{\pm 0.3}$ & 47.5$_{\pm 0.6}$ & 46.6$_{\pm 0.6}$ & \textbf{49.4}$_{\pm 0.3}$ \\
\textit{Avg.}     & 40.6 & 45.3$_{\pm 0.6}$ & 45.4$_{\pm 0.2}$ & 42.7$_{\pm 0.3}$ & 42.4$_{\pm 0.3}$ & \textbf{45.9}$_{\pm 0.1}$ \\

\bottomrule
\end{tabular}
\caption{Experimental results on multiple mathematical reasoning benchmarks. The results are reported as mean and standard deviation across 3 random seeds (\textit{i.e.}, Mean$_{\pm \text{Std}}$). The best results are highlighted in bold.}
\label{tab:main_results_math}
\end{table*}
\section{Experiments}\label{sec:exp}

Our evaluation focuses on four research questions: \textbf{RQ1:} Can FREIA achieve superior reasoning performance compared to other unsupervised RL-based baselines? \textbf{RQ2:} Does FREIA effectively balance the trade-off between exploration and consensus? \textbf{RQ3:} What are the specific contributions of FER and AAS to the overall effectiveness of FREIA? \textbf{RQ4:} How sensitive is FREIA's performance to its key hyperparameter $\alpha$?

\subsection{Settings}

\noindent \textbf{Models.} To assess mathematical reasoning capabilities, we selected Qwen2.5-Math-1.5B-Instruct \cite{yang2024qwen21}, Qwen2.5-3B-Instruct \cite{yang2024qwen2}, and DeepSeek-R1-Distill-Qwen-1.5B \cite{guo2025deepseek}. Moreover, Qwen2.5-Coder-3B-Instruct \cite{hui2024qwen2} and Qwen2.5-VL-3B-Instruct \cite{bai2025qwen2} were used in SQL generation and multi-modal reasoning, respectively.

\noindent \textbf{Datasets.} Training for mathematical reasoning was conducted using MATH \cite{hendrycks2021measuring}, followed by evaluations on MATH500 \cite{hendrycks2021measuring}, AIME24 \cite{li2024numinamath}, AIME25 \cite{codeforcesamerican}, AMC23 \cite{ouyang2022training}, Minerva \cite{lewkowycz2022solving}, and OlympiadBench \cite{huang2024olympicarena}. For SQL generation, we utilized BIRD-Train \cite{li2024can} for training, with evaluation performed on Spider-Dev \cite{yu2018spider} and BIRD-Dev \cite{li2024can}. We used the training and testing splits of Geometry3K \cite{lu2021inter} for multi-modal reasoning. 

\begin{figure}[h!]
\centering
\includegraphics[width=2.8in]{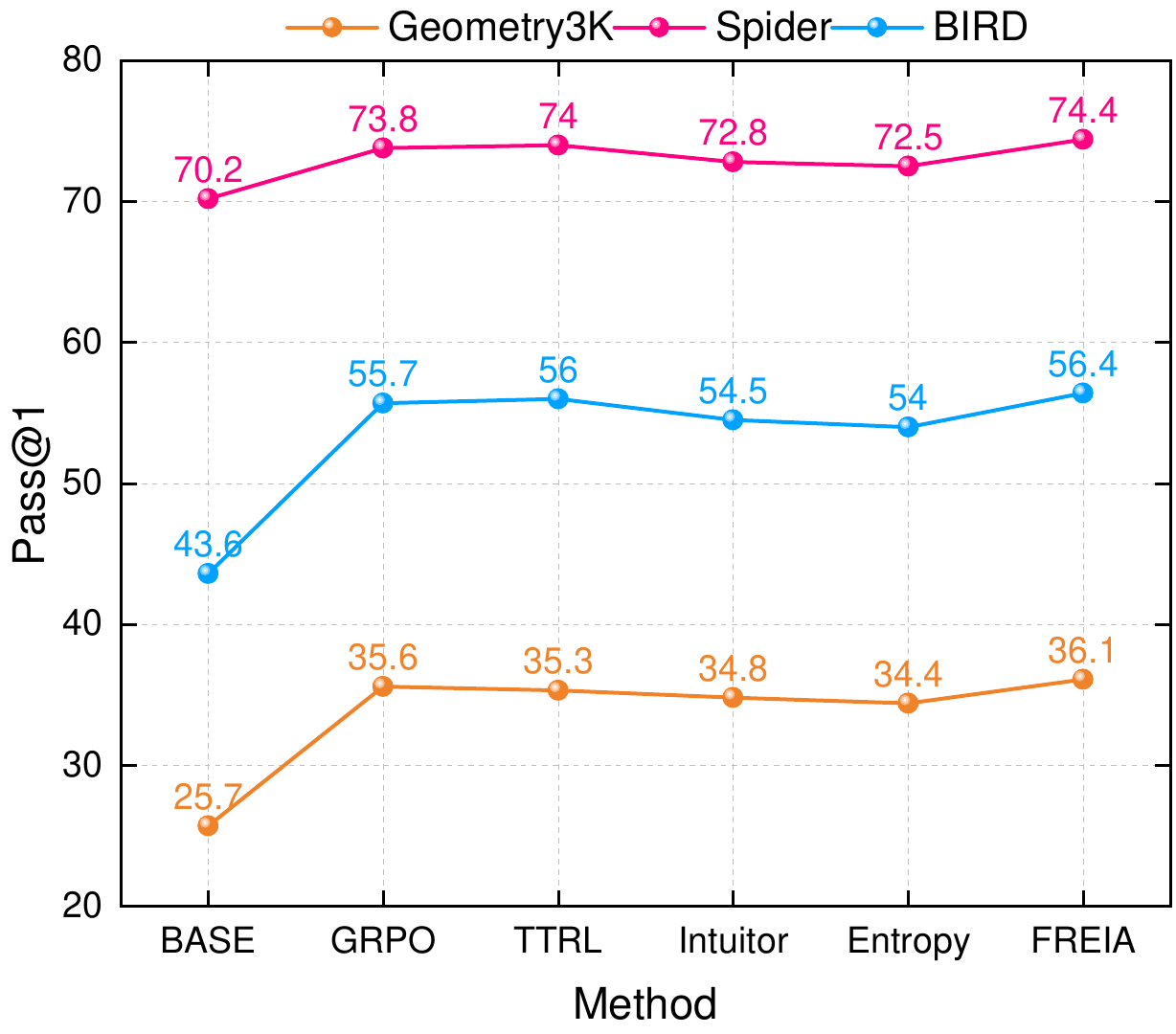}
\caption{Experimental results on SQL generation and multi-modal reasoning.}
\label{fig:geoSQL}
\end{figure}

\begin{figure}[h!]
    \centering
    \includegraphics[width=2.8in]{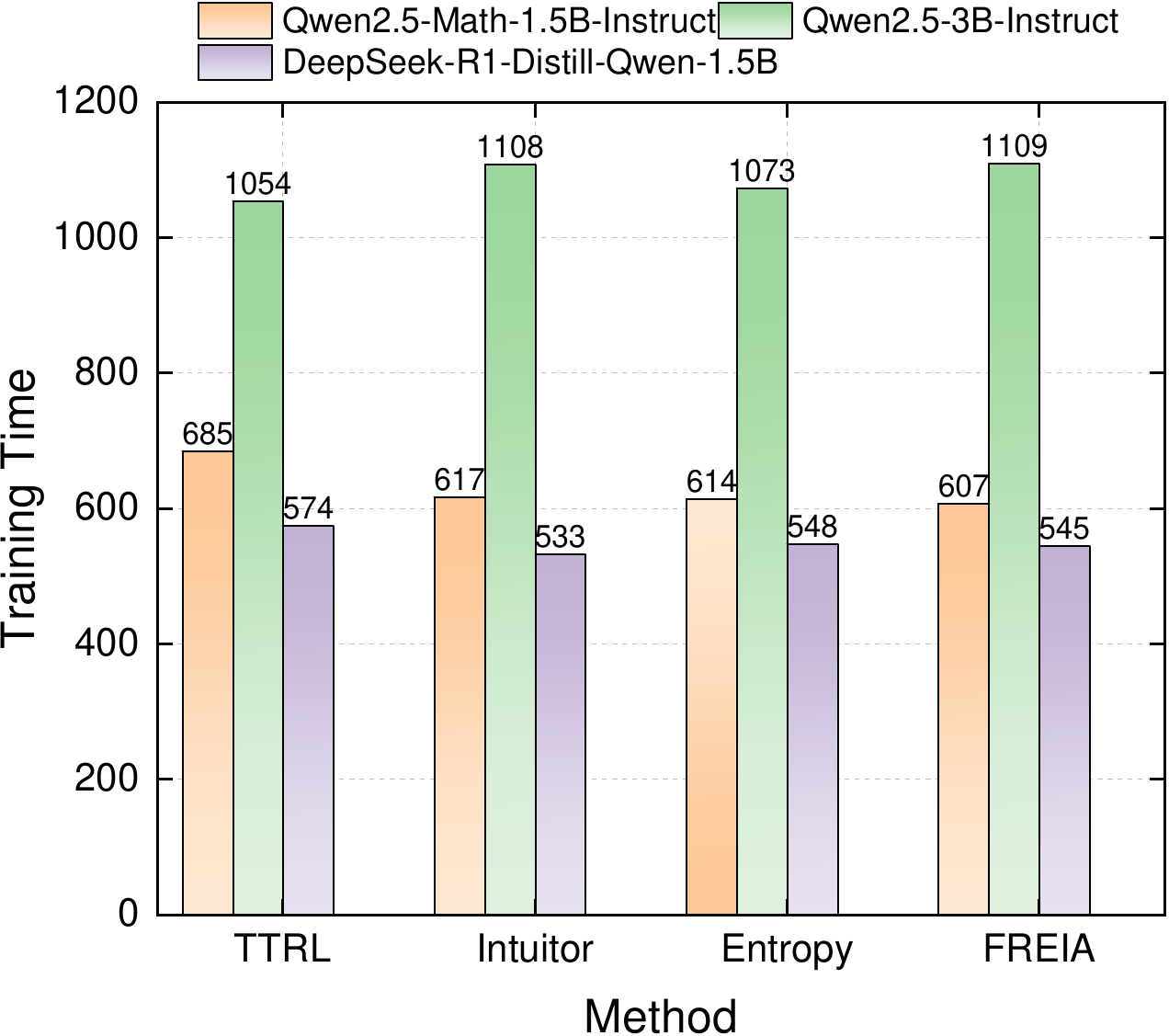}
    \caption{Comparison of total training wall-clock time (minutes) using various methods.}
    \label{fig:time}
\end{figure}

\noindent \textbf{Implementation Details.}
In this work, the coefficient of KL loss term was $\beta=0.001$. The batch size and the number of rollouts were $512$ and $8$, and we used a sampling temperature of $1.0$. We adopted AdamW \cite{zhou2024towards} with a learning rate of $1 \times 10^{-6}$ and trained for $400$ steps. For FREIA, the parameter $\alpha$ was $2$. During evaluation, the sampling temperature was $0.6$, and Pass@1 was used for evaluation. We performed experiments using four NVIDIA GeForce A100 40GB GPUs. FREIA was compared against base model, GRPO (supervised RL) \cite{shao2024deepseekmath}, TTRL \cite{zuo2025ttrl}, Entropy \cite{prabhudesai2025maximizing}, and Intuitor \cite{zhao2025learning}, which were set to the same hyperparameters as FREIA. Further experimental setups are shown in Appendix \ref{appendix:reference}.

\begin{figure*}[t]
\centering
\includegraphics[width=0.95\textwidth]{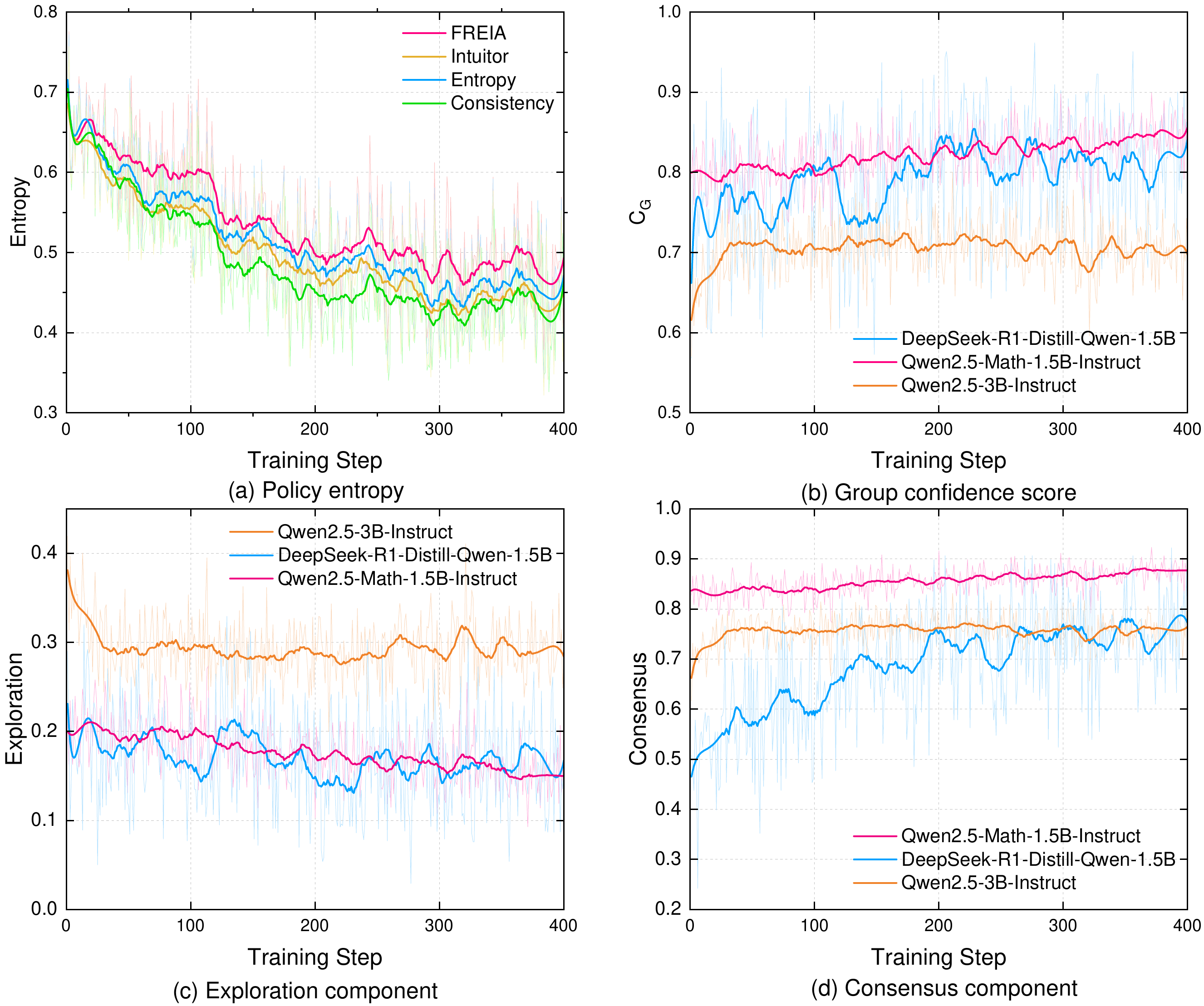}
\caption{
Training dynamics of FREIA. 
(a) Policy entropy using DeepSeek-R1-Distill-Qwen-1.5B; 
(b) Group confidence score ($C_G$); 
(c) Exploration component of the reward; 
(d) Consensus component of the reward.
}
\label{fig:analysis_curves}
\end{figure*}

\subsection{Experimental Results}

\begin{figure*}[t]
\centering
\includegraphics[width=0.95\textwidth]{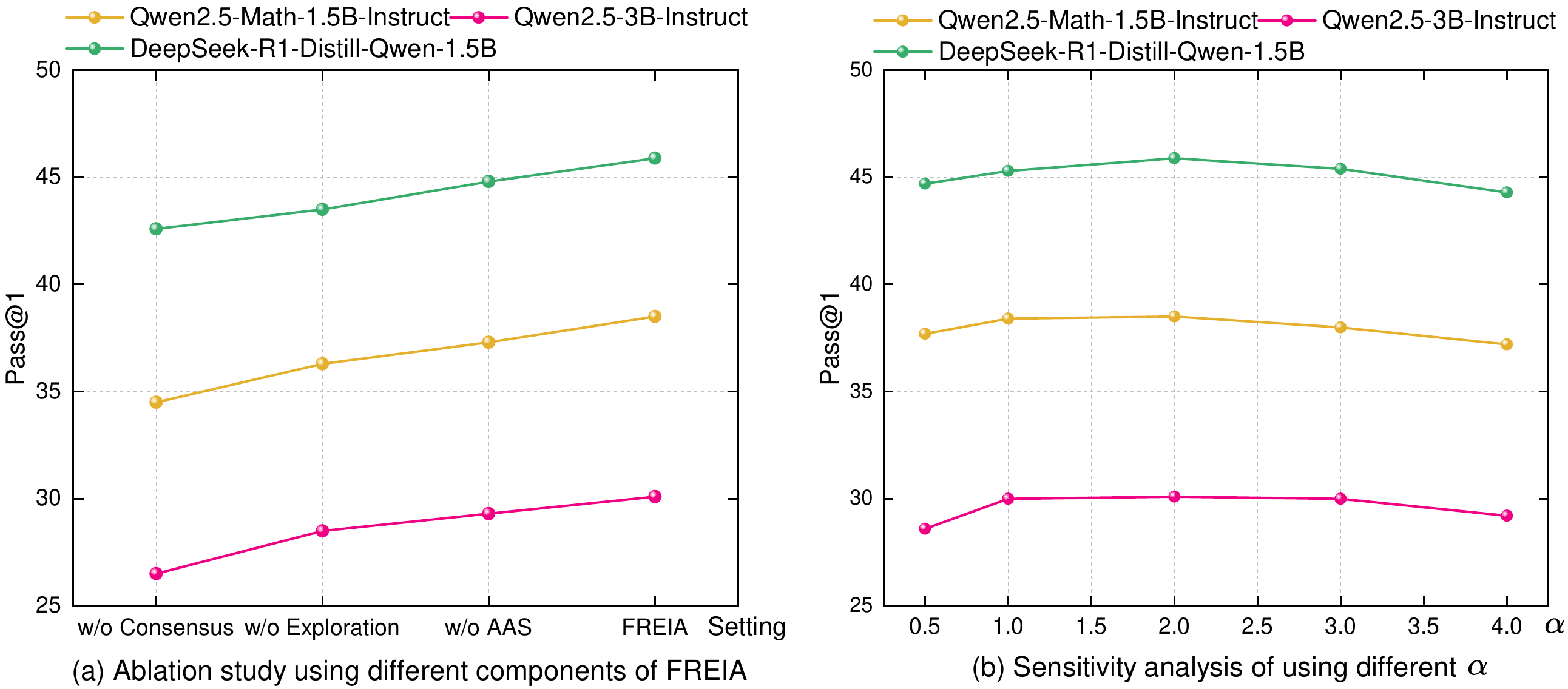}
\caption{
Ablation study and hyperparameter sensitivity analysis of FREIA.
(a) Average Pass@1 of the full FREIA compared to its ablated variants. 
(b) Average Pass@1 under varying values of the hyperparameter $\alpha$.
}
\label{fig:ablation_sensitivity}
\end{figure*}

\subsubsection{Main Results (RQ1)}
\label{ssec:main_results}

As shown in Table \ref{tab:main_results_math}, FREIA achieves superior average performance on mathematical reasoning. Remarkably, FREIA rivals or even surpasses the supervised GRPO. This performance gain is largely attributed to the granularity and density of the learning signal. While GRPO relies on binary feedback offering limited guidance, FER provides \textit{continuous feedback from the trade-off between consensus and exploration}. Therefore, by encouraging diverse reasoning paths aligned with the model's internal knowledge, FREIA acquires robust patterns that generalize better to unseen complexities. This is further evidenced in Figure \ref{fig:geoSQL} and Table \ref{tab:generalization_results}, where FREIA showcases strong transferability to SQL generation and multi-modal reasoning. As a result, minimizing free energy cultivates a more adaptive reasoning capability than outcome supervision.

\subsubsection{Training Dynamics (RQ2)}
\label{ssec:training_dynamics}

To elucidate the mechanisms of FREIA, we analyze the evolution of key metrics during training.

\noindent \textbf{Computational Efficiency.}
Figure \ref{fig:time} compares the total wall-clock training time using different methods. Despite incorporating FER and AAS, FREIA maintains training efficiency analogous to other baselines. This suggests that the overhead from FER and AAS is minimal, establishing FREIA as an efficient framework for unsupervised self-improvement. Further analysis of FREIA's computational efficiency is shown in Appendix \ref{sec:efficiency_analysis}.

\noindent \textbf{Free-Energy-Inspired Training Dynamics.} Figure~\ref{fig:analysis_curves} offers insights into the learning process in FREIA. First, the consistent decrease in policy entropy (Figure~\ref{fig:analysis_curves}(a)) and the upward trend in $C_G$ (Figure~\ref{fig:analysis_curves}(b)) indicate that the model gradually reduces internal uncertainty and concentrates on high-consensus answers.
This is driven by the dynamic interplay between the two FER components.  Initially, the Exploration term promotes the discovery of diverse reasoning paths. As the model's reasoning improves, the Consensus term exhibits an upward trend shown in Figure~\ref{fig:analysis_curves}(d), reinforcing the discovered high-quality solutions. However, the Exploration term in Figure~\ref{fig:analysis_curves}(c) maintains fluctuating throughout training. This confirms that FREIA sustains necessary exploration to mitigate early convergence. Additional analysis on output diversity is provided in Appendix \ref{sec:diversity_analysis}.



\subsubsection{Ablation Study and Sensitivity Analysis (RQ3 \& RQ4)}
\label{ssec:ablation_sensitivity}


\noindent \textbf{Impact of Key Components (RQ3).}
As shown in Figure \ref{fig:ablation_sensitivity}(a), a consistent improvement is observed from ablated variants to FREIA, confirming that each innovation is essential. The most pronounced performance degradation occurs in the \textit{w/o Consensus} variant, suggesting that consensus is a primary driver of self-improvement. The results in \textit{w/o Exploration} variant also decline, highlighting the need of mitigating early convergence. While the \textit{w/o AAS} variant outperforms other ablations, it remains inferior to the full FREIA. This gap demonstrates that static advantage shaping is inadequate for refining the policy optimization process.

\noindent \textbf{Hyperparameter Sensitivity (RQ4).}
Figure \ref{fig:ablation_sensitivity}(b) examines the impact of the parameter $\alpha$, which controls the sensitivity of belief refinement within FER. Specifically, at lower $\alpha$ values, the system exhibits excessive stochasticity, resulting in unstable reasoning and difficulty in consolidating reliable paths. Conversely, excessively high $\alpha$ values strengthens consensus, leading to early convergence toward suboptimal solutions. Crucially, performance reaches its maximum at an intermediate $\alpha$ value, indicating that FREIA effectively balances the trade-off between exploration and consensus. Furthermore, it exhibites strong robustness, obviating the need for fine-grained hyperparameter tuning to achieve competitive performance. Further analysis on the rollout size $G$ and additional results are shown in Appendices \ref{sec:sensitivity_group_size} and \ref{sec:additional results}, respectively.

\subsection{Case Study}
To illustrate the advantages of FREIA over other unsupervised methods, we present a case study in Appendix \ref{sec:case}. Specifically, the problem involves a geometry problem with a strict inequality constraint. Notably, FREIA correctly outputs the solution by systematically enumerating all potential vertex configurations and filtering candidates against the constraint to isolate the valid answer. In contrast, TTRL falls into a reasoning shortcut by assuming a default vertex order, leading to a result that violates the explicit condition ($\textit{i.e.}, x>7$). Entropy and Intuitor fail to reason effectively, degenerating into repetitive loops without performing actual calculations. This demonstrates FREIA’s superior capability in complex reasoning compared to other unsupervised baselines.

\section{Conclusion}\label{sec:con}
In this work, we propose FREIA, a unified framework that integrates unsupervised exploration and reasoning alignment under the Free Energy Principle. It introduces Free Energy‑Driven Reward (FER) and Adaptive Advantage Shaping (AAS) without ground-truth supervision. Extensive experiments across multiple reasoning benchmarks showcase that FREIA promotes self‑improvement and achieves competitive reasoning performance. This approach provides a new paradigm for alignment in scenarios where labeled data is unavailable.

\section*{Limitations}

Although FREIA exhibits strong capabilities, we identify specific avenues for future enhancement. (1) Our current resource budget limited the empirical scope to models with a maximum of 3B parameters. Nevertheless, we anticipate that the principles underlying FREIA’s mechanisms will scale effectively to larger foundation models. (2) The Group Confidence relies exclusively on final answer distributions, neglecting the semantic nuances of intermediate reasoning paths. Developing a more advanced mechanism incorporating process-level monitoring would enhance the model's efficacy. (3) AAS employs batch-level skewness as a proxy for the collective learning state, which may obscure intra-batch variations. A more granular approach adapting to heterogeneous learning dynamics within a single batch represents a valuable direction for future optimization.

\section*{Ethical Considerations}
We are committed to responsible AI research and adhere to the following principles: (1) To guarantee data privacy and reproducibility, we exclusively utilized publicly available datasets for training and evaluation, thereby avoiding any risk to private user data. (2) The study relied entirely on open-source LLMs. While we recognize the environmental impact associated with the energy consumption required for training, we believe improving sample efficiency is a step toward greener AI.

\section*{Acknowledgments} 
This work is supported by the National Key Research and Development Program of China under Grant 2023YFB3106504, the National Natural Science Foundation of China under project (No. 62472126), Shenzhen Science and Technology Program under Grant ZDSYS20210623091809029, the Major Key Project of PCL under Grant PCL2024A04 and PCL2025A16, and CCF-Huawei Populus Grove Fund.

\bibliography{custom}

\newpage
\appendix

\newpage

\section{Appendix A: Additional Experimental Setup} \label{appendix:reference}

\subsection{Models and Benchmarks}
Table \ref{tab:models_and_benchmarks} details the access links for the models and benchmarks utilized in this study. All datasets are publicly available under the CC BY-SA 4.0 license, which enables modifications and inclusion of additional annotations on the original datasets.

\begin{table*}[t]
    \centering
    \small 
    \begin{tabularx}{\textwidth}{l X X}
        \toprule
        \textbf{Name} & \textbf{Access Link} & \textbf{\# Samples} \\
        \midrule
        \multicolumn{3}{c}{\textit{\textbf{Models}}} \\
        \midrule
        Qwen2.5-Math-1.5B-Instruct & \url{https://huggingface.co/Qwen/Qwen2.5-Math-1.5B-Instruct} & - \\
        Qwen2.5-3B-Instruct & \url{https://huggingface.co/Qwen/Qwen2.5-3B-Instruct} & -  \\
        DeepSeek-R1-Distill-Qwen-1.5B & \url{https://huggingface.co/deepseek-ai/DeepSeek-R1-Distill-Qwen-1.5B} & - \\
        Qwen2.5-Coder-3B-Instruct & \url{https://huggingface.co/Qwen/Qwen2.5-Coder-3B-Instruct} & - \\
        Qwen2.5-VL-3B-Instruct & \url{https://huggingface.co/Qwen/Qwen2.5-VL-3B-Instruct} & - \\
        \midrule
        \multicolumn{3}{c}{\textit{\textbf{Benchmarks}}} \\
        \midrule
        MATH & \url{https://huggingface.co/datasets/HuggingFaceH4/MATH} & 12000 training data \\
        MATH500 & \url{https://huggingface.co/datasets/HuggingFaceH4/MATH-500} & 500 evaluation data\\
        AIME24 & \url{https://huggingface.co/datasets/HuggingFaceH4/aime_2024} & 30 evaluation data\\
        AIME25 & \url{https://huggingface.co/datasets/HuggingFaceH4/aime_2025} & 30 evaluation data\\
        AMC23 & \url{https://huggingface.co/datasets/math-ai/amc23} & 40 evaluation data\\
        Minerva & \url{https://huggingface.co/datasets/svc-huggingface/minerva-math} & 272 evaluation data\\
        OlympiadBench & \url{https://huggingface.co/datasets/knoveleng/OlympiadBench} & 674 evaluation data\\
        Spider & \url{https://yale-lily.github.io/spider} & 1034 evaluation data\\
        BIRD & \url{https://bird-bench.github.io/} & 9428 and 1534 data for training and evaluation\\
        Geometry3K & \url{https://huggingface.co/datasets/hiyouga/geometry3k} & 2100 and 601 data for training and evaluation\\
        \bottomrule
    \end{tabularx}
    \caption{Details of models and benchmarks utilized in this study.}
    \label{tab:models_and_benchmarks}
\end{table*}

\subsection{Answer Extraction and Equivalence}
To ensure accurate reward computation, we applied task-specific strategies to extract answers and calculate unique answers.

\paragraph{Mathematical Reasoning and Geometry3K.}
We employed standard answer extraction techniques used in prior work \cite{shao2024deepseekmath}. The final answer was extracted from the answer box and compared via exact string matching.

\paragraph{SQL Generation.}
Since different SQL queries can be semantically equivalent despite having different surface forms, we determined equivalence based on \textit{SQL execution results} rather than string matching. To be specific, two SQL queries were considered the same answer if they yielded identical execution results on the given databases. The set of unique answers was defined by the set of distinct SQL execution outputs.

\section{Appendix B: Pseudo Code for FREIA} \label{appendix:pesudo}

We provide the pseudo code for FREIA in Algorithm \ref{alg:freia}.

\begin{algorithm*}[h!]
\small
\caption{Free Energy-Driven Reinforcement Learning with Adaptive Shaping (FREIA)}
\label{alg:freia}
\begin{flushleft}
\textbf{Input}: Policy model $\pi_\theta$, reference model $\pi_{\theta_{\text{ref}}}$, training dataset $\mathcal{D}$; \\
\textbf{Hyperparameters}: Learning rate $\eta$, KL coefficient $\beta$, group size $G$, temperature $\tau$, sharpening factor $\alpha$; \\
\textbf{Output}: Optimized policy model $\pi_\theta^*$;
\end{flushleft}
\begin{algorithmic}[1]
\For{step $t = 1$ to $T$}
    \State Sample query $q \sim \mathcal{D}$; 
    \State {/* --- \textit{\textbf{Step 1: Generation \& Belief Formation}} --- */}
    \State Sample $G$ responses $\mathcal{Y} = \{y_i\}_{i=1}^G \sim \pi_{\theta}(\cdot|q)$ and derive answers $\{a_i\}_{i=1}^G$;    
    \State Identify $M$ unique answers and compute frequencies $f_j$ for $j \in \{1, \dots, M\}$;
    \State Compute belief distribution $w_j \leftarrow \text{Softmax}(\alpha \cdot \log f_j)$; \Comment{Eq. (\ref{eq:posterior_weight})}
    
    \State {/* --- \textit{\textbf{Step 2: Dual-Objective Reward Calculation}} --- */}
    \State Compute Group Confidence: 
    $C_G \leftarrow \begin{cases} 
    1.0 & \text{if } M = 1 \\
    1 - \frac{H(W)}{\log M} = 1 - \frac{-\sum_{j} w_j \log w_j}{\log M} & \text{if } M > 1 
    \end{cases}$; \Comment{Eq. (\ref{eq:group_confidence})}
    
    \For{$i = 1$ to $G$}
        \State $r_{\text{cons}}(y_i) \leftarrow \begin{cases}
        1.0 & \text{if } a_i = \text{Vote}(\{a_k\}_{k=1}^G) \\
        0.0 & \text{otherwise}
    \end{cases}$; 
        \State $r_{\text{explore}}(y_i) \leftarrow \tanh (-\log w_i)$; 
        \State $R_i \leftarrow C_G \cdot r_{\text{cons}}(y_i) + (1 - C_G) \cdot r_{\text{explore}}(y_i)$; \Comment{Eq. (\ref{eq:bcr_final})}
    \EndFor
    
    \State {/* --- \textit{\textbf{Step 3: Adaptive Advantage Shaping (AAS)}} --- */}
    \State Compute statistics: $\mu_R \leftarrow \text{mean}(\{R_i\})$, $\sigma_R \leftarrow \text{std}(\{R_i\})$;
    \State Calculate skewness: $S \leftarrow \frac{1}{G} \sum_{i=1}^G \left(\frac{R_i - \mu_R}{\sigma_R + \epsilon}\right)^3$; \Comment{Eq. (\ref{eq:skewness})}
    
    \State Compute modulation weights: $w_{\text{pos}} \leftarrow \sigma(-S)$, $w_{\text{neg}} \leftarrow \sigma(S)$; 
    
    \For{$i = 1$ to $G$}
        \State $\tilde{A}_i \leftarrow \frac{R_i - \mu_R}{\sigma_R + \epsilon}$;
        \If{$\tilde{A}_i > 0$}
            \State $\hat{A}_i \leftarrow w_{\text{pos}} \cdot \tilde{A}_i$; \Comment{Positive Advantage Modulation}
        \Else
            \State $\hat{A}_i \leftarrow w_{\text{neg}} \cdot \tilde{A}_i$; \Comment{Negative Advantage Modulation}
        \EndIf
    \EndFor
    
    \State {/* --- \textit{\textbf{Step 4: Policy Optimization}} --- */}
    \State $\mathcal{L}(\theta) \leftarrow \mathbb{E} \Big[\frac{1}{G} \sum_{i=1}^G \frac{1}{|o_i|} \sum_{t=1}^{|o_i|} 
    \min\Big( r_{i,t}(\theta) \hat{A}_{i},\,  
    \text{clip}(r_{i,t}(\theta), 1-\epsilon, 1+\epsilon) \hat{A}_{i} \Big) 
    - \beta D_{\text{KL}}(\pi_\theta \| \pi_{\text{ref}})\Big]$;
    \State Update parameters: $\theta \leftarrow \theta + \eta \nabla_\theta \mathcal{L}(\theta)$;
\EndFor
\end{algorithmic}
\end{algorithm*}

\section{Appendix C: Further Analysis}
\label{sec:appendix_b}

\newtheorem{theorem}{\bf Theorem}
\newtheorem{definition}{\bf Definition}
\newtheorem{proposition}{\bf Proposition}
\newtheorem{lemma}{\bf Lemma}
\newtheorem{argument}{\bf Argument}
\newtheorem{conclusion}{\bf Conclusion}
\newtheorem{Proof}{\bf Proof}
\newtheorem{assumption}{\bf Assumption}
\newtheorem{remark}{\bf Remark}
\newtheorem{corollary}{\bf Corollary}

\subsection{Theoretical Derivation: FER as Precision-Weighted Active Inference}
\label{sec:fep_derivation}

While FER is conceptually grounded in the \textit{Free Energy Principle} (FEP), it can also be expressed as a tractable approximation of the maximization of the \textit{Precision-Weighted Expected Free Energy} (EFE). The correspondence between the EFE objective and the proposed reward formulation is formalized through two operational assumptions.

\paragraph{General Objective.}
In Active Inference, agents minimize the expected free energy $G(\pi)$. Maximizing $-G(\pi)$ can be decomposed into two components: identifying preferred outcomes (\textit{Pragmatic Value}) and reducing uncertainty (\textit{Epistemic Value}) \cite{friston2020generative}:
\begin{equation}
\begin{aligned}
    -G(\pi) &\approx \underbrace{\mathbb{E}_{Q(o)}[\ln P(o \mid \theta_{goal})]}_{\text{Pragmatic}} \\ 
    &\quad + \underbrace{\mathbb{E}_{Q(o)}[D_{KL}(Q(s \mid o) \parallel Q(s))]}_{\text{Epistemic}}
\end{aligned}
\end{equation}
Here, $o$, $\theta_{goal}$, and $s$ denote the observed generated answers, the parameters of the goal distribution (\textit{i.e.}, prior preferences), and the latent reasoning states, respectively.

To adapt this formulation to unsupervised RL, we define the \textit{Precision-Weighted EFE Objective} using the precision $\beta \in [0,1]$:
\begin{equation}
    \label{eq:weighted_efe}
    J(\pi, \beta) = \beta \cdot (\text{Pragmatic}) + (1-\beta) \cdot (\text{Epistemic})
\end{equation}

\begin{definition}[Empirical Precision]
\label{def:precision}
The precision $\beta$ of the model’s current belief state is defined as the complement of the normalized entropy of the generated sample distribution $W$:
\begin{equation}
    \beta \coloneqq C_G = 
    \begin{cases} 
    1.0, & \text{if } M = 1, \\
    1 - \frac{H(W)}{\log M}, & \text{if } M > 1,
    \end{cases}
    \label{eq:group_confidence}
\end{equation}
where $M$ is the number of unique samples. This formulation provides a tractable estimator of the model’s consensus confidence.
\end{definition}

To make Eq.~\eqref{eq:weighted_efe} computationally tractable without ground-truth supervision, we introduce the following two approximations:

\begin{assumption}[Consensus-Truth Proxy]
\label{ass:consensus}
In the absence of ground-truth supervision, the mode of the current empirical distribution $y^*$ is assumed to approximate the optimal goal state. The pragmatic utility is modeled as a Dirac delta function centered on the consensus:
\begin{equation}
\begin{aligned}
     &P(o \mid \theta_{goal}) \approx \delta(o = y^*) \\
     &\implies \text{Utility}_{\text{prag}} = \mathbb{I}(o = y^*)
\end{aligned}
\end{equation}
\end{assumption}

\begin{assumption}[Satiable Information Gain]
\label{ass:satiable}
The epistemic value is associated with the surprisal of the outcome $-\ln P(o)$, where $P(o)$ is approximated by $w_i$ (Eq.~(\ref{eq:posterior_weight})). Since raw surprisal is unbounded, the utility of information is assumed to follow a saturation curve confined to $[0,1]$:
\begin{equation}
    \text{Utility}_{\text{epis}} \approx \tanh(-\ln w_i)
\end{equation}
\end{assumption}

Based on these assumptions, we now present the theorem that formally justifies the FER reward formulation.

\begin{theorem}[FER Derivation]
\label{thm:fer_derivation}
Under Assumptions \ref{ass:consensus} and \ref{ass:satiable}, maximizing the Precision-Weighted EFE objective $J(\pi, \beta)$ with empirical precision $\beta = C_G$ is equivalent to maximizing the expected FER reward $R_{\text{FER}}$.
\end{theorem}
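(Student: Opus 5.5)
The plan is to write out $J(\pi,\beta)$ from Eq.~\eqref{eq:weighted_efe} as an expectation over the sampled outputs. Then substitute the two assumed utilities term by term, and check that the integrand equals the FER reward $R_i$ of Eq.~\eqref{eq:bcr_final}. Since the policy only sees the empirical distribution of the $G$ rollouts, I will replace $\mathbb{E}_{Q(o)}$ by the Monte Carlo average $\frac{1}{G}\sum_{i=1}^G$ over $y_i \sim \pi_\theta(\cdot\mid x)$ with answers $a_i$. This is the same expectation that appears in the GRPO objective Eq.~\eqref{eq:GRPO}.

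\textbf{Step 1 (pragmatic term).} By Assumption~\ref{ass:consensus}, $\ln P(o\mid\theta_{goal})$ is replaced by the utility $\mathbb{I}(o=y^*)$, where $y^*$ is the mode of the empirical answer distribution, i.e.\ $\text{Vote}(\{a_k\}_{k=1}^G)$. I will note explicitly that the log of a Dirac delta is degenerate. Assumption~\ref{ass:consensus} is therefore read as directly specifying the pragmatic \emph{utility}, not a literal log-likelihood; equivalently, it is the monotone surrogate that preserves the ordering of outcomes. This yields $\frac{1}{G}\sum_i r_{\text{cons}}(y_i)$ by Eq.~\eqref{eq:exploit_reward}.

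\textbf{Step 2 (epistemic term).} By Assumption~\ref{ass:satiable}, the per-outcome epistemic value is the saturated surprisal $\tanh(-\ln w_i)$. Here $w_i$ is the belief weight of Eq.~\eqref{eq:posterior_weight} for the answer $a_i$. This matches $r_{\text{explore}}(y_i)$ in Eq.~\eqref{eq:explore_reward}. I will remark that the expectation of surprisal under $Q$ is an entropy, which equals the mutual-information form $\mathbb{E}_{Q(o)}[D_{KL}(Q(s\mid o)\|Q(s))]$ when the answer is treated as determining the latent state. Thus the assumption only replaces the unbounded surprisal by its bounded transform.

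\textbf{Step 3 (combination and invariance).} Substituting $\beta=C_G$ from Definition~\ref{def:precision} gives
$J = \frac{1}{G}\sum_{i=1}^G \bigl[C_G\, r_{\text{cons}}(y_i) + (1-C_G)\, r_{\text{explore}}(y_i)\bigr] = \frac{1}{G}\sum_i R_i$,
which is the empirical expected FER reward. Equivalence of the two maximization problems then follows because the objectives coincide. I would add that GRPO advantage normalization, with or without AAS, is invariant to positive affine rescaling of rewards, so any such constant offset in the utilities is harmless.

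The main obstacle is that $C_G$ itself depends on the sampled group, and hence on $\pi$. So $J(\pi,\beta)$ with $\beta=C_G$ is not literally a fixed linear combination. I would handle this by treating $\beta$ as evaluated at the current policy and held fixed during the update (stop-gradient), exactly as FER does in the algorithm. Under that convention the identification is exact per group. I would state the equivalence in this sense rather than as a claim about gradients through $C_G$.
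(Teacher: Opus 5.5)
Your proposal is correct and follows the paper's own argument: substitute Assumption~\ref{ass:consensus} into the pragmatic term, Assumption~\ref{ass:satiable} into the epistemic term, and $\beta = C_G$ from Definition~\ref{def:precision} into Eq.~\eqref{eq:weighted_efe}, then read off $\mathbb{E}[R_{\text{FER}}]$. Your extra caveats are refinements of details the paper leaves implicit, not a different route: the Monte Carlo reading of the expectation, and the stop-gradient treatment of $C_G$, which should equally cover the group-dependent vote $y^*$ and weights $w_i$.
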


\begin{proof}
Let the empirical probability of an outcome $o_i$ be $w_i$. Substituting into Eq.~\eqref{eq:weighted_efe}:

1. \textbf{Pragmatic Term:} By Assumption~\ref{ass:consensus}, the pragmatic component becomes the indicator function $\mathbb{I}(o_i = y^*)$, which corresponds to $r_{\text{cons}}$.

2. \textbf{Epistemic Term:} By Assumption~\ref{ass:satiable}, the epistemic component becomes $\tanh(-\ln w_i)$, corresponding to $r_{\text{explore}}$.

3. \textbf{Precision Modulation:} By Definition~\ref{def:precision}, the weighting factor $\beta$ is given by the group confidence $C_G$.

Substituting these into $J$:
\begin{equation}
    \begin{aligned}
        J(\pi, C_G) &= C_G \cdot \mathbb{E}[\mathbb{I}(o = y^*)] \\
        &\quad + (1-C_G) \cdot \mathbb{E}[\tanh(-\ln w_i)] \\
        &= \mathbb{E}_{\pi} \left[ \underbrace{C_G \cdot r_{\text{cons}} + (1-C_G) \cdot r_{\text{explore}}}_{R_{\text{FER}}} \right]
    \end{aligned}
\end{equation}

Therefore, the reward $R_{\text{FER}}$ used in policy updates is a direct realization of the Precision-Weighted EFE objective.
\end{proof}

This derivation establishes FER as a concrete instantiation of Active Inference, in which the agent adaptively shifts from \textit{risk minimization} (consensus) to \textit{information seeking} (exploration) according to the statistical reliability ($C_G$) of its outputs. This theoretical connection provides a principled foundation for FER’s adaptive balance between consensus and exploration in unsupervised reasoning.

\begin{table*}[h]
\centering
\small
\renewcommand{\arraystretch}{1.3}
\begin{tabular}{p{0.95\textwidth}}
\toprule
\rowcolor[rgb]{0.88, 0.95, 1.0} \textbf{Setup:} Sample Size $G=8$. \textbf{Scenario:} A high-ambiguity task with a weak false consensus. \\
\rowcolor[rgb]{0.88, 0.95, 1.0} \textbf{Generated Answers:} $A_1 \sim A_5$ are incorrect (Weak Majority). $A_6 \sim A_8$ are correct (Strong Minority). \\
\rowcolor[rgb]{0.88, 0.95, 1.0} \textbf{Empirical Frequencies:} $f_{wrong}=5/8=0.625$, $f_{correct}=3/8=0.375$. \\
\midrule
\textbf{1. Belief Formation (Step 1, with $\alpha=2.0$):} \\
- Raw Proportions ($f$): $\{0.625, 0.375\}$. \\
- Squared Proportions ($f^\alpha$): $\{0.625^2 \approx 0.39, \ 0.375^2 \approx 0.14\}$. Sum $\Sigma \approx 0.53$. \\
- Belief Weights ($w = f^\alpha / \Sigma$): \\
  \quad $w_{wrong} = 0.39 / 0.53 \approx \textbf{0.74}$, \quad $w_{correct} = 0.14 / 0.53 \approx \textbf{0.26}$. \\

\textbf{2. Group Confidence ($C_G$) Calculation:} \\
- Entropy $H(W) = -(0.74 \ln 0.74 + 0.26 \ln 0.26) \approx 0.57$. \\
- Max Entropy $\ln M = \ln 2 \approx 0.69$ (2 unique answer clusters). \\
- $C_G = 1 - (0.57 / 0.69) = \textbf{0.17}$ (Low confidence, high uncertainty). \\

\textbf{3. Reward Component Calculation:} \\
- \textbf{consensus ($r_{\text{cons}}$):} Mode is ``Wrong''. $r^{wrong}_{\text{exploit}} = 1.0$, $r^{correct}_{\text{exploit}} = 0.0$. \\
- \textbf{Exploration ($r_{\text{explore}}$):} Using $\tanh (-\ln w)$. \\
  \quad $r^{wrong}_{\text{explore}} = \tanh (-\ln(0.74)) \approx \textbf{0.29}$ (Low information gain). \\
  \quad $r^{correct}_{\text{explore}} = \tanh (-\ln(0.26)) \approx \textbf{0.87}$ (High information gain). \\

\textbf{4. Final Reward Integration ($R = C_G \cdot r_{\text{cons}} + (1-C_G) \cdot r_{\text{explore}}$):} \\
- \textbf{Wrong Majority ($A_1 \sim A_5$):} $R_{wrong} = 0.17 \cdot 1.0 + 0.83 \cdot 0.29 \approx \textbf{0.41}$. \\
- \textbf{Correct Minority ($A_6 \sim A_8$):} \quad $R_{correct} = 0.17 \cdot 0.0 + 0.83 \cdot 0.87 \approx \textbf{0.72}$. \\

\textbf{5. Comparison with Standard Baseline (Majority Voting):} \\
- \textbf{Standard Voting Reward:} $R_{wrong} = 1.0$ (Win), $R_{correct} = 0.0$ (Lose). \\
\midrule
\rowcolor[rgb]{1.0, 0.95, 1.0} \textbf{Conclusion:} In standard voting, the model reinforces the incorrect majority ($R=1.0$). In FREIA, the high conflict ($C_G=0.17$) triggers a shift to exploration. Consequently, the correct minority receives a significantly higher reward than the majority (\textbf{0.72} vs \textbf{0.41}), successfully reversing the policy update direction towards the truth. \\
\bottomrule
\end{tabular}
\caption{A case study on False Consensus. FREIA detects the high uncertainty and inverts the reward signal, prioritizing the correct minority over the incorrect majority.}
\label{tab:freia_case_study}
\end{table*}

\subsection{The Necessity of Adaptive Precision via Group Confidence}
\label{app:mixing_ablation}

To validate the efficacy of FEP, we conducted an ablation study where the Group Confidence ($C_G$) was replaced with static mixing coefficients.

\subsubsection{Experimental Setup}
In the standard FREIA framework, the final reward is computed as an adaptive convex combination:
\begin{equation}
    R_{\text{total}} = C_G \cdot r_{\text{cons}} + (1 - C_G) \cdot r_{\text{explore}}
\end{equation}
where $C_G \in [0, 1]$ represents a confidence score indicating the degree of group consensus for each sample. In this ablation, $C_G$ was replaced with a fixed hyperparameter $\lambda \in \{0.2, 0.5, 0.8\}$ for all training samples:
\begin{equation}
    R_{\text{fixed}}(\lambda) = \lambda \cdot r_{\text{cons}} + (1 - \lambda) \cdot r_{\text{explore}}
\end{equation}

This setup forces a static trade-off, where $\lambda=0.2$ and $\lambda=0.8$ represent a high-exploration strategy and a high-consensus strategy, respectively.

\subsubsection{Analysis of Static vs. Dynamic Strategies}
Experimental results in Table~\ref{tab:mixing_ablation_revise} reveal a clear performance hierarchy and expose the weaknesses of static weighting:

\paragraph{Excessive Exploration ($\lambda = 0.2$).}
A low $\lambda=0.2$ yielded the poorest average performance. By penalizing convergence toward consensus, this configuration injected noise into learning signals, even when the consensus was correct. Maximizing surprisal without adequate consensus consistently led to divergent behavior.

\paragraph{Blind Conformity ($\lambda = 0.8$).}
Greater reliance on consensus improved overall performance but introduced systematic risk. When the majority was wrong, a high $\lambda$ reinforced erroneous reasoning. Without the ability to down‑weight consensus under uncertainty, the model remained vulnerable to majority‑driven bias. Therefore, a fixed $\lambda$ implicitly assumes a constant level of uncertainty across all problems, which is theoretically unsound.

\paragraph{Dynamic Adaptation ($C_G$).}
FREIA achieved the highest accuracy across almost all benchmarks by employing the adaptive $C_G$ as an \textit{intelligent gate}:
\begin{itemize}
    \item \textbf{Low Uncertainty ($C_G \to 1$):} Under strong consensus, FREIA operated close to the $\lambda=1.0$ case, fully exploiting the correct path.
    \item \textbf{High Uncertainty ($C_G \to 0$):} Under low consensus, FREIA automatically lowered the weight of $r_{\text{cons}}$ and amplified $r_{\text{explore}}$, enabling exploration of alternative reasoning paths rather than overfitting to unreliable majority answers.
\end{itemize}

\subsection{Theoretical Analysis of AAS} \label{appendix:theoretical_guarantees}

In this section, an analytical framework is provided to interpret the stability mechanisms in AAS. The gradient dynamics are analyzed under stylized regimes of reward skewness. This analysis illustrates that AAS functions as a variance‑reduction mechanism in practice.

The behavior of AAS is examined from two complementary perspectives: \textit{Influence Limitation} for outlier robustness in positive‑skew regimes, and \textit{Variance Damping} for stability in negative‑skew regimes.

\begin{proposition}[Gradient Stabilization in Skewed Regimes]
Under extreme skewness, AAS analytically reduces the gradient magnitudes of outliers in positive‑skew regimes and mitigates variance amplification in negative‑skew regimes.
\end{proposition}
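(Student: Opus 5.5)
I will formalize the two stylized regimes with a binary reward model, where exact formulas are available. Suppose the $G$ FER scores take two values, $R_i\in\{r_{\text{lo}},r_{\text{hi}}\}$, with a fraction $p$ of them equal to $r_{\text{hi}}$. Standardization makes the shaped advantages invariant to the gap $r_{\text{hi}}-r_{\text{lo}}$, so the analysis depends only on $p$. Writing $q=1-p$ and ignoring $\epsilon$, the standardized advantages are
\begin{equation}
\tilde A_{\text{hi}}=\sqrt{q/p},\qquad \tilde A_{\text{lo}}=-\sqrt{p/q},
\end{equation}
and the sample skewness of Eq.~(\ref{eq:skewness}) is the Bernoulli skewness
\begin{equation}
\mathcal S=\frac{q-p}{\sqrt{pq}}.
\end{equation}
The positive-skew regime (\emph{Weak Consensus}) is then $p\to 0$, which gives $\mathcal S\to+\infty$. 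The negative-skew regime (\emph{Strong Consensus}) is $p\to 1$, which gives $\mathcal S\to-\infty$.

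\textbf{Positive skew: influence limitation.} Here the outlier advantage is $\tilde A_{\text{hi}}=\sqrt{q/p}\sim p^{-1/2}$, which blows up. AAS multiplies it by $w_{\text{pos}}=\sigma(-\mathcal S)$, and since $\sigma(-x)\le e^{-x}$,
\begin{equation}
|\hat A_{\text{hi}}|\le \sqrt{q/p}\,\exp\!\left(-\frac{q-p}{\sqrt{pq}}\right),
\end{equation}
which tends to $0$ as $p\to0$. The exponential decay in $p^{-1/2}$ beats the polynomial growth. Treating the per-sample policy-gradient contribution as $\hat A_i\nabla_\theta\log\pi_\theta(y_i)$ with a bounded score norm $\|\nabla\log\pi\|\le L$, the gradient norm due to the outlier is at most $L|\hat A_{\text{hi}}|$, which vanishes. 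By contrast it is $L\sqrt{q/p}$ under standard normalization. This gives the influence-limitation claim.

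\textbf{Negative skew: variance damping.} By symmetry, now $\tilde A_{\text{lo}}=-\sqrt{p/q}$ is the large rare penalty, and $w_{\text{neg}}=\sigma(\mathcal S)\to0$ exponentially in $q^{-1/2}$. I will quantify variance through the second moment of the shaped advantages, $\frac1G\sum_i\hat A_i^2$. Under standard normalization this equals $1$, split as $p\cdot q/p=q$ from the high group and $q\cdot p/q=p$ from the low group. So as $q\to0$, the rare samples carry a per-sample energy $p/q$ that is unbounded. Under AAS the rare samples contribute $q\,w_{\text{neg}}^2\,p/q=p\,\sigma(\mathcal S)^2\to0$, and the majority term is only scaled by $w_{\text{pos}}=\sigma(-\mathcal S)\to1$. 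The variance of the stochastic gradient estimate is bounded by $L^2\cdot\frac1G\sum\hat A_i^2$. Its amplification by rare deviations, measured by the largest per-sample weight $\max_i|\hat A_i|$, is then bounded rather than diverging.

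\textbf{Main obstacle.} Making ``variance amplification'' precise, because the total standardized variance is always $1$. The meaningful quantity is the worst-case per-sample magnitude, or equivalently the conditional variance given that a rare sample is drawn. I will state the result in terms of $\max_i|\hat A_i|$ and the gradient-norm bound $L\max_i|\hat A_i|$. Two further limitations are that the binary model and the bounded-score assumption are stylized. I will flag them as such and note that the $\epsilon$ term only further shrinks magnitudes, so the bounds persist.
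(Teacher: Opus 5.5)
Your argument is correct and has the same two-regime skeleton as the paper's proof. You compute the standardized advantage of the rare sample, note that it grows like the inverse square root of the rare fraction, and then apply the sigmoid weight. The paper's $\tilde A_{\mathrm{rare}}\approx\sqrt G$ is your $\sqrt{q/p}$ at $p=1/G$, and its $\tilde A_{\mathrm{fail}}\approx-1/\sqrt p$ is your $-\sqrt{p/q}$, with the paper's $p$ playing the role of your $q$.

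What you do differently is replace the paper's two separate approximate scenarios with one exact two-point model, and this adds real rigor. In the positive-skew case the paper only shows $w_{\mathrm{pos}}<1$. In the negative-skew case it asserts $\lim_{S\to-\infty}w_{\mathrm{neg}}\tilde A_{\mathrm{fail}}=0$ without comparing how fast $w_{\mathrm{neg}}\to0$ against how fast $|\tilde A_{\mathrm{fail}}|\to\infty$ in the same limit. Your closed form $\mathcal S=(q-p)/\sqrt{pq}$ couples the two quantities, and $\sigma(-x)\le e^{-x}$ resolves the resulting $0\cdot\infty$ form, so the shaped outlier advantages provably vanish.

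You can push this further. With $t=\sqrt{q/p}$ you have $\mathcal S=t-t^{-1}$, so $|\hat A_{\mathrm{hi}}|=t\,\sigma(t^{-1}-t)$ and $|\hat A_{\mathrm{lo}}|=t^{-1}\sigma(t-t^{-1})$. Both are at most $\tfrac12$ for every $p\in(0,1)$, since $e^{t-1/t}\ge 2t-1$ for $t>0$ with equality only at $t=1$. That turns your ``bounded rather than diverging'' into an explicit uniform constant.

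Your observation that $\frac1G\sum_i\tilde A_i^2\equiv1$ also makes explicit something the paper glosses over: ``variance amplification'' can only mean per-sample magnitude, not the aggregate second moment. The paper's version is shorter and closer to intuition, but it is purely qualitative.

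Two small corrections. First, your closing remark that $\epsilon$ ``only further shrinks magnitudes, so the bounds persist'' is not right as stated. With $c=\sigma_R/(\sigma_R+\epsilon)$, the definitions give $\tilde A_i\mapsto c\,\tilde A_i$ but also $\mathcal S\mapsto c^3\mathcal S$. That pushes both weights toward $\tfrac12$ and weakens the damping, so your exponential bound transfers only when $\epsilon\ll\sigma_R$. State the result for $\epsilon\to0$, as the paper implicitly does.

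Second, with $G$ rollouts $p$ is a multiple of $1/G$, so in your model $|\mathcal S|\le(G-2)/\sqrt{G-1}$. Your limits $p\to0$ and $p\to1$, like the paper's asymptotics, are therefore really large-$G$ statements. At $G=8$ a single outlier is damped by the finite factor $\sigma(-6/\sqrt7)\approx0.09$.
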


\begin{proof}
We analyze two typical scenarios of training dynamics:

\paragraph{Scenario 1: Positive Skew.}
Consider a batch with sparse high rewards (\textit{e.g.}, a single high-reward answer among many low-reward answers). The distribution exhibits large positive skew $S > 0$.

(1) \textbf{Standard Normalization Instability:} 
    Consider a batch of size $G$ where only one sample $y_{\text{rare}}$ receives a large reward $R_{\text{rare}}$, while others have near-zero rewards. 
    Using the standard normalization $\tilde{R}_i = \frac{R_i - \mu_R}{\sigma_R}$, the batch mean is 
    \(\mu_R \approx R_{\text{rare}}/G\), and the variance is dominated by this single outlier:
    \begin{equation}
       \sigma_R^2 \approx \frac{(R_{\text{rare}} - \mu_R)^2}{G} \approx \frac{R_{\text{rare}}^2}{G}
    \Rightarrow \sigma_R \approx \frac{R_{\text{rare}}}{\sqrt{G}} 
    \end{equation}
    
    Therefore, the normalized reward of the rare sample becomes:
    \begin{equation}
        \tilde{R}_{\text{rare}} \approx \frac{R_{\text{rare}}}{R_{\text{rare}}/\sqrt{G}} = \sqrt{G}
    \end{equation}

    This yields $\tilde{A}_{\text{rare}} \approx \sqrt{G}$, making the gradient 
    \(\nabla J \propto \sqrt{G} \cdot \nabla \ln \pi(y_{\text{rare}})\) overly dependent on one sample. 
    If the outlier is wrong, it triggers a destructive high-variance policy update.
    
(2) \textbf{AAS Damping:} Since $S>0$, AAS applies a scaling weight $w_{\text{pos}} = \sigma(-S)<1$, giving an effective advantage $\hat{A}_{\text{rare}} = w_{\text{pos}} \cdot \tilde{A}_{\text{rare}} < \tilde{A}_{\text{rare}}$. This bounds the outlier’s influence and keeps the policy update conservative in unsupervised settings.

\paragraph{Scenario 2: Negative Skew.}
Consider a regime where low-value rewards are rare. The batch contains predominantly high-reward answers ($r \approx 1$) with probability $1-p$, and occasional low-reward answers ($r \approx 0$) with probability $p \ll 1$. The skewness $S<0$.

(1) \textbf{Variance Sensitivity in Standard RL:}
    The standard deviation of rewards is $\sigma \approx \sqrt{p}$. The normalized advantage for the rare failure is:
    \begin{equation}
        \tilde{A}_{fail} = \frac{0 - (1-p)}{\sqrt{p}} \approx -\frac{1}{\sqrt{p}}
    \end{equation}

    As $p \to 0$ (\textit{i.e.}, the model approaches perfection), the magnitude $|\tilde{A}_{fail}| \to \infty$. This results in gradient magnitudes inversely proportional to $\sqrt{p}$, which can introduce high variance and destabilize policy optimization.

(2) \textbf{AAS Variance Damping:}
In the negatively skewed regime, the distribution of rewards is dominated by high-reward samples. 
AAS attenuates these negative outliers through a sigmoid-based weighting $w_{neg} = \sigma(S)$.
As $S$ decreases, $w_{neg}$ rapidly approaches zero, effectively damping the contribution of extreme negative advantages:
\begin{equation}
    \lim_{S \to -\infty} w_{neg} \cdot A_{fail} = 0
\end{equation}

As a result, this alleviates gradient explosion and promotes stable training.
\end{proof}

\subsection{The Dual Role of Belief Sharpening: Exploration Control and Noise Filtering}
\label{sec:belief}

In Section 4.1, the belief‑sharpening mechanism was defined as \(w_j \propto f_j^\alpha\), where \(\alpha\) acts as a structural parameter controlling the learning dynamics. This section investigates the influence of \(\alpha\) from two perspectives: (i) its isolated effect on \(r_{\text{explore}}\), and (ii) its theoretical grounding within Generalized Bayesian Inference and the bias–variance trade‑off.

\paragraph{Isolation of Exploration Dynamics.}

Let \(f_{\text{win}}\) denote the frequency of the dominant consensus answer. The exponent \(\alpha\) defines the reward landscape and modulates the incentive for policy deviation:
    
(1) \textbf{\(\alpha=1\) (Raw Surprisal):} This case corresponds to using the raw frequency distribution without sharpening (\(w_{\text{win}}=f_{\text{win}}\)). During early training or under challenging conditions, consensus is weak, producing a flat distribution and a high exploration reward for the dominant answer. This results in redundant exploration, where the model repeatedly examines known solutions rather than searching for novel reasoning trajectories.
        
(2) \textbf{High \(\alpha\) (Mode Collapse):} As \(\alpha \to \infty\), \(w_{\text{win}} \to 1\) and \(r_{\text{explore}}(w_{\text{win}}) \to 0\). While this reduces redundant exploration, it prematurely suppresses diversity. If the initial majority is incorrect, gradients associated with the correct minority vanish, causing convergence to a local optimum.

Empirical results show that \(\alpha=2\) achieves superior performance, indicating that the sharpened reward \(\tanh(-\log w_j)\) offers a more informative signal than the raw form \(\tanh(-\log f_j)\) for guiding exploration.

\paragraph{Principled Selection: Bayesian and Bias-Variance Perspectives.}
The role of $\alpha$ can be further grounded in the \textit{Power Posterior} framework of Generalized Bayesian Inference:
\begin{equation}
    P(s|D) \propto [P(D|s)]^{\eta} \cdot P(s)
\end{equation}

In FREIA, the observed frequency $f_j$ serves as the empirical likelihood $P(D|s)$, and the computed weight $w_j$ represents the posterior belief. The parameter $\alpha$ corresponds to the exponent $\eta$, functioning as a \textit{confidence temperature}.

\begin{itemize}
    \item \textbf{Noise Filtering (Variance Reduction):} In unsupervised reinforcement learning, raw frequencies (\(\alpha=1\)) are highly variable and noise‑sensitive. Increasing \(\alpha\) acts as a denoising mechanism, suppressing the long tail of stochastic errors and concentrating probability mass around the mode. This adjustment reduces the variance of the learning signal and stabilizes gradient estimation.
    
    \item \textbf{Gradient Preservation (Bias Control):} In contrast to extreme hard‑filtering (\(\alpha \to \infty\)), the setting \(\alpha=2\) retains a small but non‑zero bias. This configuration maintains adaptability to adjust incorrect consensus, ensuring that plausible minority solutions remain learnable despite imperfections in current consensus.
\end{itemize}

As a result, $\alpha$ functions as a controllable temperature parameter balancing exploration and noise suppression.

\subsection{Empirical Robustness of Skewness Estimation} \label{sec:skewness}

To validate the stability of AAS, we tracked the evolution of the skewness throughout training for each batch, as shown in Figure \ref{fig:skewness}. The trajectories reveal two critical insights regarding stability:
\begin{itemize}
    \item \textbf{Temporal Stability:} All models display a coherent downward trend with minimal high‑frequency fluctuations. This smooth evolution indicates that the estimation variance is negligible, implying that the skewness metric remains stable across training iterations.
    \item \textbf{Direction Persistence:} Throughout the entire training process, the skewness consistently stays within the negative domain. This persistent negativity provides a substantial margin from zero, confirming that the direction of AAS modulation is reliably determined by the intrinsic reward landscape. It also demonstrates statistical immunity to stochastic variations arising from individual samples.
\end{itemize}

\begin{figure}[h!]
    \centering
    \includegraphics[width=3in]{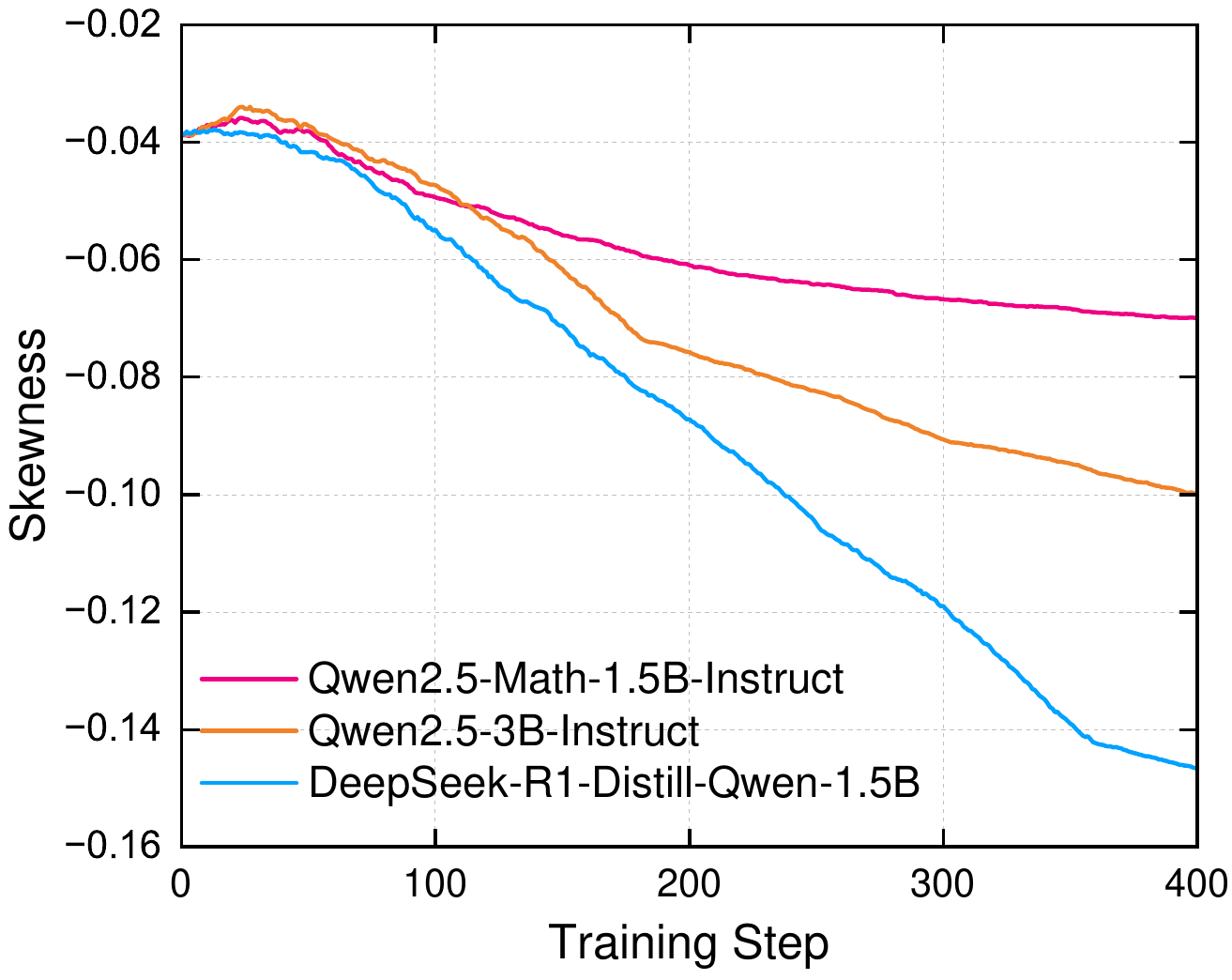}
    \caption{The evolution of the reward skewness throughout training.}
    \label{fig:skewness}
\end{figure}

\subsection{Empirical Analysis of Output Diversity}
\label{sec:diversity_analysis}

We evaluated the generative diversity of FREIA by comparing it with TTRL and GRPO on MATH500. Three complementary metrics were employed to characterize diversity across semantic, lexical, and logical dimensions.

(1) \textbf{Average Cosine Similarity (ACS):} ACS measures the semantic homogeneity among generated solutions. We computed cosine similarity between sentence embeddings\footnote{\url{https://huggingface.co/Qwen/Qwen3-Embedding-0.6B}} for all response pairs given the same question. A lower ACS value indicates greater semantic variance.

(2) \textbf{Self-BLEU:} This metric quantifies lexical overlap by calculating the average BLEU‑4 score of each generated solution against the remaining samples in a batch. Lower Self‑BLEU scores correspond to reduced verbatim repetition.

(3) \textbf{GPT-Judged Diversity Score:} We further assessed conceptual diversity using GPT‑4o as an external judge, which rated the distinctness of reasoning paths on a 0 $\sim$ 3 scale. Higher scores (3) were assigned to fundamentally different mathematical approaches, while lower scores (0) indicated superficial rephrasing.

\begin{table}[h]
    \centering
    \small
    \setlength{\tabcolsep}{1.6mm}
    \renewcommand{\arraystretch}{1.2}
    \begin{tabular}{l|ccc}
    \toprule
    \textbf{Method} & \textbf{ACS} ($\downarrow$) & \textbf{Self-BLEU} ($\downarrow$) & \textbf{GPT-Diversity} ($\uparrow$) \\
    \midrule
    GRPO            & 0.88            & 0.79                              & 1.05 \\
    TTRL            & 0.76                        & 0.68                              & 1.23 \\
    \textbf{FREIA} & \textbf{0.62}               & \textbf{0.51}                     & \textbf{1.56} \\
    \bottomrule
    \end{tabular}
    \caption{Diversity metrics on the MATH500 dataset using Qwen2.5-Math-1.5B-Instruct. $\downarrow$ indicates lower is better; $\uparrow$ indicates higher is better.}
    \label{tab:diversity_metrics}
\end{table}

As shown in Table \ref{tab:diversity_metrics}, GRPO attained the highest ACS and Self-BLEU scores, confirming its susceptibility to severe mode collapse. While TTRL alleviated this issue via consistency voting, it remained constrained by the dominance of the consensus term, limiting its ability to maintain high diversity. In contrast, FREIA achieved the lowest similarity scores across all metrics. These results demonstrated that the exploration reward term $r_{\text{explore}}$ effectively penalized redundancy in low‑confidence states, guiding the policy toward distinct reasoning trajectories rather than repetitive consensus. The observed improvements align with FEP introduced in Appendix~\ref{sec:fep_derivation}.

\paragraph{GPT-Judge Prompt.} The prompt used for diversity scoring is given as follows:

\textit{``Given the following set of correct solutions to a math problem, evaluate their diversity. Assign a score from 0 to 3: 
    0 = Almost identical wording; 
    1 = Different wording but same logic; 
    2 = Slightly different logical steps; 
    3 = Fundamentally different mathematical approaches. 
    Output only the single numerical score.''}

\subsection{Computational Efficiency Analysis}
\label{sec:efficiency_analysis}

This section analyzes the training efficiency of FREIA through the lens of time complexity.

In RL-based frameworks, the computational bottleneck is dominated by the LLM's forward pass (reasoning generation) and backward pass (gradient optimization). Let $N$, $L$, and $G$ denote the number of model parameters, sequence length, and group size (rollouts per prompt), respectively. The complexity of these backbone operations scales as $\mathcal{O}(G \cdot L \cdot N)$.

In contrast, the mechanisms introduced by FREIA operate solely on scalar reward values, bypassing high-dimensional tensor manipulations:
\begin{itemize}
    \item \textbf{\texorpdfstring{$C_G$}{CG} Calculation:} This involves sorting the reward distribution of size $G$, with a complexity of $\mathcal{O}(G \log G)$.
    \item \textbf{Skewness and AAS Weighting:} This requires computing statistical moments of the reward distribution, which is linear in group size (\textit{i.e.}, $\mathcal{O}(G)$).
\end{itemize}

Given that $G$ is typically small (\textit{e.g.}, $G=8$) while $N$ is in the billions, the inequality $\mathcal{O}(G \cdot L \cdot N) \gg \mathcal{O}(G \log G)$ holds in practice across different model scales. Therefore, the additional computational overhead introduced by FREIA is theoretically negligible.

\subsection{Sensitivity Analysis: The Impact of Group Size \texorpdfstring{$G$}{G}}
\label{sec:sensitivity_group_size}

The group size $G$ is a critical hyperparameter in FREIA, governing the fidelity of statistical estimates (\textit{i.e.}, skewness, $C_G$) and the computational cost. While larger groups provide more robust gradient signals, they also increase memory and time consumption. To determine the optimal trade-off, we conducted an ablation study using Qwen2.5-Math-1.5B-Instruct with group sizes $G \in \{4, 6, 8\}$. 

As shown in Table \ref{tab:group_size_ablation}, performance is positively correlated with $G$, confirming that larger populations stabilize skewness-based advantage estimation. However, a distinct trend of \textit{diminishing performance gains} is also observed. Since rollout generation costs scale linearly with $G$, scaling beyond $G=8$ yields insufficient gains to justify the added computational burden. Therefore, $G=8$ offered a balanced configuration that maintains statistical robustness while minimizing unnecessary computation.

\subsection{Case study} \label{sec:case}

\clearpage
\onecolumn

\ifdefined\nolinenumbers
  \nolinenumbers
\fi

\begin{tcolorbox}[
    colback=white, 
    colframe=black, 
    title=\textbf{Case Study: Reasoning Process Comparison Using Different Unsupervised Methods}, 
    coltitle=white, 
    colbacktitle=black!80, 
    breakable,
    skin=enhanced,
    boxrule=1pt, 
    width=\textwidth
]

\textbf{Original Question:} \\
The coordinates of a parallelogram are $(5,3)$, $(6,8)$, $(7,4)$ and $(x,y)$ and $x>7$. What is the value of $x+y$?

\tcbline

\textbf{FREIA Answer (Correct):} \textcolor{teal}{\checkmark} \\
To find the coordinates of the fourth vertex $(x,y)$, we must consider that the vertices can be connected in different orders to form a parallelogram. Let the given points be $A(5,3)$, $B(6,8)$, and $C(7,4)$. There are three possible positions for the fourth vertex $D(x,y)$, depending on which pair of points forms a diagonal.

\vspace{0.2cm}
\textbf{Case 1: $AB$ is a diagonal.} \\
The midpoint of $AB$ must equal the midpoint of $CD$.
\begin{align*}
\frac{5+6}{2} = \frac{7+x}{2} &\implies 11 = 7+x \implies x=4 \\
\frac{3+8}{2} = \frac{4+y}{2} &\implies 11 = 4+y \implies y=7
\end{align*}
Here $x=4$, which is not greater than 7. \textbf{This case is invalid.}

\vspace{0.2cm}
\textbf{Case 2: $AC$ is a diagonal.} \\
The midpoint of $AC$ must equal the midpoint of $BD$.
\begin{align*}
\frac{5+7}{2} = \frac{6+x}{2} &\implies 12 = 6+x \implies x=6 \\
\frac{3+4}{2} = \frac{8+y}{2} &\implies 7 = 8+y \implies y=-1
\end{align*}
Here $x=6$, which is not greater than 7. \textbf{This case is invalid.}

\vspace{0.2cm}
\textbf{Case 3: $BC$ is a diagonal.} \\
The midpoint of $BC$ must equal the midpoint of $AD$.
\begin{align*}
\frac{6+7}{2} = \frac{5+x}{2} &\implies 13 = 5+x \implies x=8 \\
\frac{8+4}{2} = \frac{3+y}{2} &\implies 12 = 3+y \implies y=9
\end{align*}
Here $x=8$, which satisfies the condition $x>7$.

\vspace{0.2cm}
Thus, the valid coordinates are $(8,9)$. \\
The value of $x+y = 8+9 = 17$. \\
\textbf{The final answer is \boxed{17}.}

\tcbline

\textbf{TTRL Answer (Incorrect -- Logic Error):} \textcolor{red}{\XSolidBrush} \\
Let the vertices of the parallelogram be $A(5,3)$, $B(6,8)$, $C(7,4)$ and $D(x,y)$ in counterclockwise order.
In a parallelogram, the diagonals bisect each other. This means the midpoint of $AC$ is the same as the midpoint of $BD$.

Midpoint of $AC$:
\[ x_m = \frac{5+7}{2} = 6, \quad y_m = \frac{3+4}{2} = 3.5 \]

Now set this equal to the midpoint of $BD$:
\begin{align*}
6 = \frac{6+x}{2} &\implies 12 = 6+x \implies x=6 \\
3.5 = \frac{8+y}{2} &\implies 7 = 8+y \implies y=-1
\end{align*}

So the point is $(6,-1)$.
The question asks for $x+y$.
\[ x+y = 6+(-1) = 5 \]
\textbf{The final answer is \boxed{5}.} \\

\textit{(Note: TTRL fails because it defaults to the standard vertex ordering assumption and ignores the explicit $x>7$ constraint, a common failure mode in consensus-based methods where the majority of rollouts miss the subtle constraint.)}

\tcbline

\textbf{Entropy Answer (Collapse -- Repetition Loop):} \textcolor{red}{\XSolidBrush} \\
To solve this problem, we use the property that the diagonals of a parallelogram bisect each other. \\
Let the points be $A(5,3)$, $B(6,8)$, $C(7,4)$.
We need to find $D(x,y)$. \\
The midpoint of the diagonal is the average of the coordinates. \\
The midpoint of the diagonal is the average of the coordinates. \\
The midpoint of the diagonal is the average of the coordinates. \\
The midpoint of the diagonal is the average of the coordinates. \\
The midpoint of the diagonal is the average of the coordinates. \\
The midpoint of the diagonal is the average of the coordinates. \\
The midpoint of the diagonal is the average of the coordinates... \\

\textit{(Note: The model enters a degenerate loop, repeating the same phrase endlessly, a typical collapse symptom in entropy minimization.)}

\tcbline

\textbf{Intuitor Answer (Collapse -- Degenerate Reasoning):} \textcolor{red}{\XSolidBrush} \\
Step 1: Identify the coordinates.
The coordinates are $(5,3)$, $(6,8)$, $(7,4)$ and $(x,y)$. \\
Step 2: Use the formula.
We use the formula for the parallelogram. \\
Step 1: Identify the coordinates.
The coordinates are $(5,3)$, $(6,8)$, $(7,4)$ and $(x,y)$. \\  
Step 2: Use the formula.
We use the formula for the parallelogram. \\
Step 1: Identify the coordinates.
The coordinates are $(5,3)$, $(6,8)$, $(7,4)$ and $(x,y)$. \\
Step 2: Use the formula... \\

\textit{(Note: Similar to Entropy, Intuitor (Self-Certainty) collapses into a high-confidence repetition loop without performing actual calculation.)}

\end{tcolorbox}

\section{Appendix D: Additional Experimental Results} \label{sec:additional results}

\begin{table*}[t]
\centering
\small
\setlength{\tabcolsep}{7mm}
\begin{tabular}{lcccc}
\toprule
\textbf{Method} & \textbf{Geometry3K} & \textbf{Spider} & \textbf{BIRD} & \textbf{Avg.} \\
\midrule
Base     & 25.7              & 70.2              & 43.6              & 46.5 \\
GRPO     & 35.6$_{\pm 1.1}$  & 73.8$_{\pm 0.5}$  & 55.7$_{\pm 0.9}$  & 55.0$_{\pm 0.8}$ \\
TTRL     & 35.3$_{\pm 0.8}$  & 74.0$_{\pm 0.4}$  & 56.0$_{\pm 0.2}$  & 55.1$_{\pm 0.4}$ \\
Intuitor & 34.8$_{\pm 0.9}$  & 72.8$_{\pm 0.6}$  & 54.5$_{\pm 0.8}$  & 54.0$_{\pm 0.8}$ \\
Entropy  & 34.4$_{\pm 0.7}$  & 72.5$_{\pm 0.5}$  & 54.0$_{\pm 0.7}$  & 53.6$_{\pm 0.6}$ \\
FREIA    & \textbf{36.1}$_{\pm 0.5}$ & \textbf{74.4}$_{\pm 0.2}$ & \textbf{56.4}$_{\pm 0.3}$ & \textbf{55.6}$_{\pm 0.3}$ \\
\bottomrule
\end{tabular}
\caption{Generalization performance on SQL generation (Spider, BIRD) and multi-modal reasoning (Geometry3K). The results are reported as mean and standard deviation across 3 random seeds (format: $\text{Mean}_{\pm \text{Std}}$).}
\label{tab:generalization_results}
\end{table*}

\begin{table*}[h]
\centering
\small
\setlength{\tabcolsep}{1.5mm}
\renewcommand{\arraystretch}{1.1}
\begin{tabular}{llccccccc}
\toprule
\textbf{Model} & \textbf{Method} & \textbf{MATH500} & \textbf{AIME24} & \textbf{AIME25} & \textbf{AMC23} & \textbf{Minerva} & \textbf{Olympiad} & \textbf{Avg.} \\
\midrule
\multirow{4}{*}{\textbf{Qwen2.5-Math-1.5B}} 
 & w/o Consensus & 74.4 & 10.0 & 6.7 & 47.5 & 29.4 & 38.9 & 34.5 \\
 & w/o Exploration   & 74.8 & 13.3 & 10.0 & 50.0 & 30.1 & 39.5 & 36.3 \\
 & w/o AAS     & 75.2 & 13.3 & 13.3 & 50.0 & 31.3 & 40.5 & 37.3 \\
 & \textbf{FREIA} & \textbf{75.4} & \textbf{13.3} & \textbf{16.7} & \textbf{52.5} & \textbf{32.0} & \textbf{41.2} & \textbf{38.5} \\
\midrule
\multirow{4}{*}{\textbf{Qwen2.5-3B-Instruct}} 
 & w/o Consensus & 63.2 & 3.3 & 3.3 & 35.0 & 24.6 & 29.5 & 26.5 \\
 & w/o Exploration   & 64.2 & 6.7 & 6.7 & 37.5 & 25.4 & 30.7 & 28.5 \\
 & w/o AAS     & 64.8 & 6.7 & 6.7 & \textbf{40.0} & 26.1 & 31.5 & 29.3 \\
 & \textbf{FREIA} & \textbf{65.2} & \textbf{10.0} & \textbf{10.0} & 37.5 & \textbf{26.1} & \textbf{31.9} & \textbf{30.1} \\
\midrule
\multirow{4}{*}{\textbf{DeepSeek-R1-Distill}} 
 & w/o Consensus & 81.2 & 16.7 & 16.7 & 65.0 & 29.0 & 47.2 & 42.6 \\
 & w/o Exploration   & 81.6 & 16.7 & 16.7 & 67.5 & 29.8 & 48.5 & 43.5 \\
 & w/o AAS     & 82.0 & 20.0 & 16.7 & 70.0 & 30.5 & \textbf{49.6} & 44.8 \\
 & \textbf{FREIA} & \textbf{82.2} & \textbf{20.0} & \textbf{20.0} & \textbf{72.5} & \textbf{31.3} & 49.4 & \textbf{45.9} \\
\bottomrule
\end{tabular}
\caption{Detailed Ablation Study (RQ3). We compare FREIA against removing key components: w/o Consensus, w/o Exploration, and w/o AAS.}
\label{tab:ablation_full_datasets}
\end{table*}

\begin{table*}[h]
\centering
\small
\setlength{\tabcolsep}{1.5mm}
\renewcommand{\arraystretch}{1.1}
\begin{tabular}{llccccccc}
\toprule
\textbf{Model} & \textbf{Coefficient} & \textbf{MATH500} & \textbf{AIME24} & \textbf{AIME25} & \textbf{AMC23} & \textbf{Minerva} & \textbf{Olympiad} & \textbf{Avg.} \\
\midrule
\multirow{5}{*}{\textbf{Qwen2.5-Math-1.5B}} 
 & $\alpha=0.5$ & 74.8 & 13.3 & 16.7 & 50.0 & 31.3 & 40.1 & 37.7 \\
 & $\alpha=1.0$ & 75.2 & 13.3 & 16.7 & 52.5 & 31.6 & 40.8 & 38.4 \\
 & \textbf{$\alpha=2.0$} & \textbf{75.4} & \textbf{13.3} & \textbf{16.7} & \textbf{52.5} & \textbf{32.0} & \textbf{41.2} & \textbf{38.5} \\
 & $\alpha=3.0$ & 75.0 & 13.3 & 16.7 & 50.0 & 32.0 & 40.9 & 38.0 \\
 & $\alpha=4.0$ & 74.8 & 13.3 & 13.3 & 50.0 & 31.3 & 40.2 & 37.2 \\
\midrule
\multirow{5}{*}{\textbf{Qwen2.5-3B-Instruct}} 
 & $\alpha=0.5$ & 64.6 & 6.7 & 6.7 & 37.5 & 25.4 & 30.6 & 28.6 \\
 & $\alpha=1.0$ & 65.0 & 10.0 & 10.0 & 37.5 & 25.7 & 31.8 & 30.0 \\
 & \textbf{$\alpha=2.0$} & \textbf{65.2} & \textbf{10.0} & \textbf{10.0} & 37.5 & \textbf{26.1} & \textbf{31.9} & \textbf{30.1} \\
 & $\alpha=3.0$ & 64.8 & 10.0 & 6.7 & 40.0 & 25.7 & 31.9 & 30.0 \\
 & $\alpha=4.0$ & 64.6 & 10.0 & 6.7 & 37.5 & 25.0 & 31.5 & 29.2 \\
\midrule
\multirow{5}{*}{\textbf{DeepSeek-R1-Distill}} 
 & $\alpha=0.5$ & 81.6 & 16.7 & 20.0 & 70.0 & 30.5 & 49.3 & 44.7 \\
 & $\alpha=1.0$ & 82.0 & 16.7 & 20.0 & 72.5 & 30.9 & 49.9 & 45.3 \\
 & \textbf{$\alpha=2.0$} & \textbf{82.2} & \textbf{20.0} & \textbf{20.0} & \textbf{72.5} & \textbf{31.3} & \textbf{49.4} & \textbf{45.9} \\
 & $\alpha=3.0$ & 81.8 & 20.0 & 16.7 & 72.5 & 31.3 & 50.1 & 45.4 \\
 & $\alpha=4.0$ & 81.6 & 16.7 & 16.7 & 70.0 & 30.9 & 49.6 & 44.3 \\
\bottomrule
\end{tabular}
\caption{Hyperparameter Sensitivity Analysis (RQ4). Impact of the coefficient $\alpha$ on model performance across diverse benchmarks.}
\label{tab:sensitivity_robustness_final}
\end{table*}

\begin{table*}[t]
\centering
\resizebox{\textwidth}{!}{
\begin{tabular}{lcccccccc}
\toprule
\textbf{Setting} & \textbf{Mixing Strategy} & \textbf{MATH500} & \textbf{AIME24} & \textbf{AIME25} & \textbf{AMC23} & \textbf{Minerva} & \textbf{Olympiad} & \textbf{Avg.} \\
\midrule
Fixed Mixing & $\lambda=0.2$ & 74.6 & 10.0 & 13.3 & 48.5 & 30.5 & 39.8 & 36.1 \\
Fixed Mixing & $\lambda=0.5$ & 74.8 & 13.3 & 13.3 & 50.0 & 30.8 & 40.2 & 37.1 \\
Fixed Mixing & $\lambda=0.8$ & 75.0 & 13.3 & 16.7 & 50.0 & 31.3 & 40.7 & 37.8 \\
\midrule
\textbf{FREIA} & \textbf{Dynamic $C_G$} & \textbf{75.4} & \textbf{13.3} & \textbf{16.7} & \textbf{52.5} & \textbf{32.0} & \textbf{41.2} & \textbf{38.5} \\
\bottomrule
\end{tabular}
}
\caption{Ablation study on the mixing coefficient $\lambda$ using Qwen2.5-Math-1.5B-Instruct. Baselines with fixed $\lambda$ represent a static trade-off between Consensus ($\lambda$) and Surprisal ($1-\lambda$). FREIA uses Group Confidence ($C_G$) to adaptively modulate this trade-off per instance.}
\label{tab:mixing_ablation_revise}
\end{table*}

\begin{table*}[h]
\centering
\small
\begin{tabular}{lccccccccc}
\toprule
\textbf{Setting} & \textbf{MATH500} & \textbf{AIME24} & \textbf{AIME25} & \textbf{AMC23} & \textbf{Minerva} & \textbf{Olympiad} & \textbf{Avg.} & \textbf{Growth Rate} \\
\midrule
FREIA ($G=4$) & 74.6 & 13.3 & 13.3 & 47.5 & 30.5 & 40.1 & 36.6 & - \\
FREIA ($G=6$) & 75.2 & 13.3 & 16.7 & 50.0 & 31.6 & 41.1 & 38.0 & \textcolor{blue}{+1.4\%} \\
FREIA ($G=8$) & \textbf{75.4} & \textbf{13.3} & \textbf{16.7} & \textbf{52.5} & \textbf{32.0} & \textbf{41.2} & \textbf{38.5} & \textcolor{teal}{+0.5\%} \\
\bottomrule
\end{tabular}
\caption{Ablation study on Group Size $G$ using Qwen2.5-Math-1.5B-Instruct. We report the Pass@1 score across all datasets. Growth Rate indicates the performance gain relative to the previous group size setting.}
\label{tab:group_size_ablation}
\end{table*}

\end{document}